\newtheorem{theorem}{Theorem}
\newtheorem{lemma}{Lemma}
\newtheorem{corollary}{Corollary}
\newcommand{\Pois}{\operatorname{Pois}}
\newcommand{\EA}{\text{(1+1)~EA}\xspace}
\newcommand{\EABM}{\text{(1+1)~EA-BM}\xspace}
\newcommand{\EAUM}{\text{(1+1)~EA-UM}\xspace}
\newcommand{\EAMM}{\text{(1+1)~EA-MM}\xspace}
\newcommand{\ie}{i.\,e.\xspace}
\newcommand{\etal}{et al.\ }
\title{Runtime Analysis of Evolutionary Algorithms with Biased Mutation for the Multi-Objective Minimum Spanning Tree Problem}
\author{
	Vahid Roostapour \\
	Optimisation and Logistics\\
	The University of Adelaide\\
	Adelaide, Australia \\
	\texttt{vahid.roostapour@adelaide.edu.au} \\
	\And
	Jakob Bossek \\
	Optimisation and Logistics\\
	The University of Adelaide\\
	Adelaide, Australia \\
	\texttt{jakob.bossek@adelaide.edu.au} \\
	\And
	Frank Neumann \\
	Optimisation and Logistics\\
	The University of Adelaide\\
	Adelaide, Australia \\
	\texttt{frank.neumann@adelaide.edu.au} \\
}
\begin{document}
	\maketitle
	
	\begin{abstract}
		Evolutionary algorithms (EAs) are general-purpose problem solvers that usually perform an unbiased search. This is reasonable and desirable in a black-box scenario. For combinatorial optimization problems, often more knowledge about the structure of optimal solutions is given, which can be leveraged by means of biased search operators. We consider the Minimum Spanning Tree (MST) problem in a single- and multi-objective version, and introduce a biased mutation, which puts more emphasis on the selection of edges of low rank in terms of low domination number. We present example graphs where the biased mutation can significantly speed up the expected runtime until (Pareto-)optimal solutions are found. On the other hand, we demonstrate that bias can lead to exponential runtime if \enquote{heavy} edges are necessarily part of an optimal solution. However, on general graphs in the single-objective setting, we show that a combined mutation operator which decides for unbiased or biased edge selection in each step with equal probability exhibits a polynomial upper bound -- as unbiased mutation -- in the worst case and benefits from bias if the circumstances are favorable.
	\end{abstract}

	\keywords{Evolutionary algorithms\and Minimum spanning tree problem\and Runtime analysis\and Biased mutation}
	
	\section{Introduction}
	\label{sec:introduction}
	
	Evolutionary algorithms (EAs) are randomized general-purpose problem solvers that mimic principles from Darwinian evolution theory. These algorithms have proven successful in a wide range of applications, in particular, in tackling (multi-objective) combinatorial $\mathcal{NP}$-hard optimization problems~\cite{DBLP:books/sp/chiong12,Deb2001}. The theoretical understanding of EAs' working principles has made tremendous progress in the past decades with respect to expected runtime analysis, fixed-budget analysis and general convergence aspects~\cite{auger2011theory,DBLP:conf/icec/Rudolph94}.
	
	The problem considered here is a classical combinatorial optimization problem with countless applications in engineering, logistics and many other fields: the \emph{Minimum Spanning Tree} (MST) problem. Given an undirected edge-weighted graph, the goal is to find a spanning sub-graph which is a tree and has minimal total weight among all such trees. When each edge is assigned multiple -- usually conflicting -- weights, one is interested in a set of multi-objective compromise solutions (moMST).
	The single-objective MST problem is well-understood and solvable in polynomial time by well-known algorithms, e.~g., the algorithm by Kruskal~\cite{Kr56}. In contrast, the moMST is proven to be $\mathcal{NP}$-hard~\cite{Ruzika2009} and all deterministic approaches may suffer from potential intractability problems. Here, many successful evolutionary multi-objective algorithms have been proposed (see, e.~g., \cite{ZG1999GeneticAlgorithm, KC2001AComparisonOfEncodings, BG2017AParetoBeneficial}).
	
	In the area of runtime analysis of bio-inspired computation, spanning tree problems have obtained significant attention. The classical MST problem has been investigated for simple single-objective approaches of EAs~\cite{DBLP:journals/tcs/NeumannW07} and ant colony optimization~\cite{DBLP:journals/tcs/NeumannW10}. Furthermore, it has been shown that a multi-objective formulation of the problem can lead to significantly faster evolutionary algorithms~\cite{DBLP:journals/nc/NeumannW06}. For the moMST, it has been shown in \cite{DBLP:journals/eor/Neumann07} that a multi-objective evolutionary algorithm can compute a $2$-approximation in pseudo-polynomial time. Furthermore, the results given in \cite{DBLP:journals/tcs/NeumannW07} have been revisited in the context of multiplicative drift analysis~\cite{DBLP:journals/algorithmica/DoerrJW12} and improved results for special graph classes have been presented in \cite{DBLP:conf/foga/ReichelS09,DBLP:conf/gecco/Witt14}.
	
	Usually evolutionary algorithms perform an unbiased search due to their frequent application in settings where knowledge on the fitness function can only be gained by fitness function evaluations. However, if domain knowledge on the composition of (Pareto-)optimal solutions is available one should incorporate this knowledge into mutation operators to speed up the evolution considerably~\cite{DoerrHN2007,DoerrHN2006,JansenS2010,FriedrichQW2018,FriedrichGQW2018}.
	Neumann and Wegener~\cite{DBLP:journals/eor/Neumann07} introduced an asymmetric mutation operator on bit strings where the probability for a 1-bit to flip depends on the number of 1-bits in the solution and likewise for 0-bits. Here, on average, the number of 1-bits in a solution is not changed which is beneficial for the minimum spanning tree. In fact, the authors were able to obtain runtime speedups adopting this operator for the MST. Jansen and Sudholt~\cite{JansenS2010}
	further investigated this operator. They give examples where asymmetry is beneficial and where it is not. Doerr et al.~\cite{DoerrHN2006, DoerrHN2007} tackle the Eulerian cycle problem with asymmetric mutation and prove much slower runtime bounds in comparison to symmetric mutation. For the classical MST problem it is legitimate to assume that edges of low weight/rank are more likely to be in an MST than edges of high weight/rank. Such knowledge can also be leveraged in terms of biased mutation as demonstrated impressively by Raidl et al.~\cite{DBLP:journals/tec/RaidlKJ06} on random graphs. The authors showed that mutation, where the edge selection probability is biased towards lower rank edges, can lead to immense speedups for evolutionary algorithms for different sub-graph selection problems, inter alia the MST. Likewise, for the moMST problem, non-dominated spanning trees are more likely composed of edges which are dominated by few other edges, i.~e., edges of low non-domination level or domination number. A recent study by Bossek et al.~\cite{DBLP:conf/gecco/BossekG019} confirms this assumption empirically. Both Raidl et al. and Bossek et al. consider the simple edge-exchange mutation on spanning trees: an edge is added to a spanning tree and an edge is dropped from the unique introduced cycle to obtain another spanning tree. Here, the authors introduce bias by modifying the edge selection probability favoring low-rank edges. Both studies serve as a starting point and motivation for our work.
	
	In this paper we consider biased mutation for evolutionary algorithms for the single- and multi-objective MST problem and compare with unbiased counterparts. Specifically, we examine the effects of mutation bias on the time complexity of simple EAs until they hit an optimal solution or cover the Pareto-front for the first time. We show that bias can be both boon and bane depending on the structure of optimal solutions on example graphs. I.~e., there are situations where introduced bias leads to improved upper bounds where we save a factor of $n$ if the ranks of edges which are part of optimal solutions are $O(n)$. Contrarily, if heavy edges are frequent members of optimal solutions, bias towards lightweight edges may entail an exponential deterioration in the expected running time. Luckily, in the single-objective setting, we can combine the best of both worlds. A simple modification, which decides for unbiased or biased mutation in each step independently with probability $1/2$, leads to a guaranteed polynomial runtime bound of $O(n^3\log(n \cdot w_{\max}))$ for general graphs where $w_{\max}$ is the maximum edge weight in the graph. At the same time this strategy benefits from bias if the circumstances allow for it saving on a factor of $n$.
	
	After having motivated our work we introduce the (mo)MST problem formally, establish a vocabulary and introduce the considered algorithms in Section~\ref{sec:preliminaries}. Sections~\ref{sec:single-objective-scenario} and \ref{sec:multi-objective-scenario} deal with our runtime analysis in the single-objective and multi-objective MST setting, respectively. Section~\ref{sec:conclusion} wraps up the work with some concluding remarks and outlook on future work.
	
	\section{Preliminaries}
	\label{sec:preliminaries}
	
	Let $G=(V,E)$ denote a graph with vertex set $V$ and edge set $E$. For convenience, we write $n = |V|$ and $m = |E|$. A spanning tree of graph $G$ is a sub-graph $G' = (V, E')$ if and only if there exists exactly one path between any two vertices in $G'$. In the single-objective scenario, each edge $e \in E$ is assigned a positive weight $w(e)$ and the goal is to find a spanning tree with minimum total weight, called Minimum Spanning Tree (MST). In the multi-objective scenario, each edge is assigned two weights $w(e)=(w_1(e), w_2(e))$.\footnote{Clearly, more than two objective functions are possible. Since we restrict our analysis to bi-objective problems in this paper we refrain from introducing the general form in favor of less notation overhead.} The goal is to find a spanning tree such that the total weight in both weight functions is minimized simultaneously. This may result in a set of incomparable trade-offs which are not necessarily better than each other in both weights. In order to capture this aspect mathematically we adopt the well-known notion of \emph{Pareto dominance}~\cite{CoelloLV2006} -- a core concept in multi-objective optimization -- to establish a partial order of spanning trees. Let $w_i(T) = \sum_{e\in T} w_i(e)$.We say spanning tree $T_1$ weakly (Pareto-)dominates spanning tree $T_2$, denoted by $T_1 \succeq T_2$, if $w_1(T_1)\leq w_1(T_2) \land w_2(T_1)\leq w_2(T_2)$. The strong dominance holds when at least one of the inequalities is strict and it is denoted by $T_1\succ T_2$. $T_1$ is called non-dominated if there is no other spanning tree that dominates $T_1$. Likewise, $w(T_1) = (w_1(T_1), w_2(T_1))$ is the non-dominated objective vector. The union set of all non-dominated spanning trees is called Pareto set, its image in objective space is called the Pareto front, and each solution is termed a Pareto(-optimal) solution or multi-objective MST (moMST). Our goal is to find a non-dominated spanning tree for each non-dominated objective vector.
	In the following, we present the algorithms that we use to tackle these problems.
	
	\subsection{Algorithms}
	\begin{algorithm}[t]
		\caption{\EA}
		\algsetup{indent=1.5em}
		\begin{algorithmic}[1]
			\STATE Let $T$ be a random spanning tree on $G = (V, E)$.
			\STATE Set the edge-selection strategy.
			\WHILE{optimum not found}
			\STATE $T^\prime \leftarrow T$
			\STATE $k\leftarrow 1+\Pois(1)$
			\STATE Based on the selection strategy, assign the probability $q(e)$ to each edge $e\in E$. \label{algline:edge-selection-strategy}
			\FOR{$k$ times}
			\STATE Choose $e\in E$ with probability $q(e)$.
			\STATE $T' \leftarrow T^\prime \cup \{e\}$
			\STATE Drop an edge from the resulting cycle in $T'$ uniformly at random. 
			\ENDFOR
			\IF{ $T^\prime$ has no worse fitness than $T$}
			\STATE $T \leftarrow T^\prime$
			\ENDIF
			\ENDWHILE
		\end{algorithmic}
		\label{alg:oneplusone}
	\end{algorithm}

	We consider the performance of the \EA (see Algorithm~\ref{alg:oneplusone}) facing the single-objective MST problem. It is initialized with a random spanning tree $T$. There have been different studies on generating random spanning trees such as a rather classical randomized algorithm by Broder \cite{DBLP:conf/focs/Broder89}, with expected running time of $O(n\log n)$ for almost all graphs or more recently by Madry et al. \cite{DBLP:conf/soda/MadryST15}. Afterwards, the algorithm sets an \emph{edge-selection strategy}, i.~e., the edge-selection probability distribution that is used in Line \ref{algline:edge-selection-strategy}. Next, the algorithm sets $k=\Pois(1)+1$, the number of edges for the mutation step, where $\Pois(1)$ stems from a Poisson distribution with rate $\lambda = 1$. The constant ensures that we always perform at least one mutation and avoids counting iterations that does not generate new solutions. The same approach has been used in \cite{DBLP:conf/foga/RoostapourP019}. In the mutation step, an edge is selected according to its probability $q(e)$ and is added to $T$. As the mutant is no longer acyclic after the edge insertion, removing a randomly chosen edge from the unique cycle is required to reestablish the tree property. This guarantees that the resulting graph is a spanning tree. The algorithm repeats this procedure $k$ times to achieve a new solution $T'$ and replaces $T$ by $T'$ if $w(T') \leq w(T)$. 
	
	We consider three versions of Algorithm~\ref{alg:oneplusone} where the difference is in the edge-selection strategy.

	\EAUM refers to the unbiased variant of \EA in which always each edge is selected with uniform probability $q(e) = 1/m$. We also consider \EA with biased mutation called \EABM, in which the mutation probability of edge $e$ has been set based on the approximation of the probability that $e$ appears in the MST. The approximation, which is the result of experimental analyses, gives higher probability to the edges with lower weights to be selected. The details on how to calculate the approximation is given in the following sections. Note that for these versions of \EA, the edge-selection strategy does not change during the optimization process and has been set at the beginning of the algorithm. In other words, the edge-selection strategy deterministically assigns values of $q(e)$ (see line~\ref{algline:edge-selection-strategy} in Algorithm~\ref{alg:oneplusone}), i.~e., either uniform or biased mutation with probability 1. Additionally, we analyze a \enquote{hybrid} \EA, called \EAMM (MM for \underline{m}ixed \underline{m}utation), where in each iteration of the outer loop the algorithm decides by fair coin-tossing which strategy (biased or unbiased) to use.

	\begin{algorithm}[t]
		\caption{GSEMO}
		\algsetup{indent=1.5em}
		\begin{algorithmic}[1]
			\STATE Initialize population $P$ with a random spanning tree on $G = (V, E)$.
			\STATE Set the edge-selection strategy.
			\WHILE{not all Pareto-optimal solutions found}
			\STATE Choose $T \in P$ uniformly at random.
			\STATE $T^\prime \leftarrow T$
			\STATE $k\leftarrow 1+\Pois(1)$
			\STATE Based on the selection strategy, assign the probability $q(e)$ to each edge $e\in E$.
			\FOR{$k$ times}
			\STATE Choose $e\in E$ with probability $q(e)$.
			\STATE $T' \leftarrow T^\prime \cup \{e\}$
			\STATE Drop an edge from the resulting cycle in $T'$ uniformly at random. 
			\ENDFOR
			\IF{ $\{ T''\in P \mid T''\succ T'\}=\emptyset$}
			\STATE $P = P\setminus\{T''\in P\mid T'\succeq T''\}\cup\{T'\}$
			\ENDIF
			\ENDWHILE
		\end{algorithmic}
		\label{alg:GSEMO}
	\end{algorithm}
	
	For the multi-objective scenario, our runtime analysis is based on the global simple evolutionary multi-objective algorithm (GSEMO; see Algorithm~\ref{alg:GSEMO}). GSEMO stores a set of non-dominated solutions in the population $P$, which is initialized with a single random spanning tree. In each iteration, it selects a solution $T$ from $P$ uniformly at random and sets the number of the edges to be added in the mutation step: one plus a random value sampled from a Poisson distribution with $\lambda = 1$. The mutation step is the same as the \EA and guarantees that the resulting graph $T'$ is also a spanning tree. If there is no solution in $P$ that strongly dominates $T'$, $T'$ is added to $P$ and all the solutions that $T'$ weakly dominates are removed from $P$. 
	Similar to the single-objective setting, two versions are subject to analysis: GSEMO-UM with uniform edge-selection probability $q(e) = 1/m$ and its biased counterpart GSEMO-BM, in which edges that are dominated by fewer edges in $E$ have higher probability to be selected for the mutation (see Section~\ref{sec:multi-objective-scenario} for details). 

	\section{Single-objective Problem}
	\label{sec:single-objective-scenario}
	
	\begin{figure}[t]
		\centering
		\begin{tikzpicture}[scale=1.8]
        \begin{scope}[every node/.style={circle, draw=transparent, minimum size=0.15em, fill=black}]
            \node (v1) at (0, 0) {};
            \node (v2) at (0.5, 0.5) {};
            \node (v3) at (1, 0) {};
            \node (v4) at (1.5, 0.5) {};
            \node (v5) at (2, 0) {};
            \node (v6) at (2.5, 0) {};
            \node (v7) at (3, 0.5) {};
            \node (v8) at (3.5, 0) {};
        \end{scope}
        \foreach \s/\t/\cost in {
            v1/v2/2a, v1/v3/3a, v3/v2/2a,
            v3/v4/2a, v3/v5/3a, v4/v5/2a,
            v6/v7/2a, v6/v8/3a, v7/v8/2a}
            \draw (\s) edge[thick, sloped, pos=0.5] node[above] {\cost} (\t);
        \draw (v5) edge[dashed, thick] (v6);
        \node[circle, minimum size=5.5em, draw] (kq) at (4, 0) {$G^C$};
        
        \foreach \x/\i in {0.5/1, 1.5/2, 3/p}
            \node at (\x, 0.7) {$T_{\i}$};
    \end{tikzpicture}
		\caption{Triangular-tailed graph $G$ with a chain of $p = n/4$ triangles and a giant component $G^C = K_{n/2}$. \cite{DBLP:journals/tcs/NeumannW07}}
		\label{fig:triangle-graph}
	\end{figure}
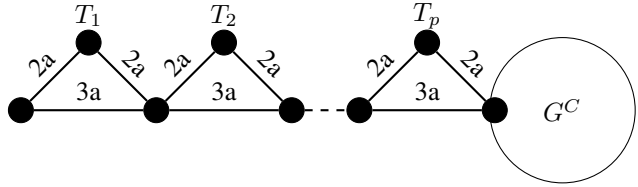
	
	In this section, we consider two types of triangular-tailed graphs, $G_1$ and $G_2$, which are structurally the same but are different in the weights of the edges. A triangular-tailed graph consists of a clique, $G^C$, with $\nu=n/2$ vertices and a triangular tail, $G^T$, with $\eta=n/4$ triangles (Figure~\ref{fig:triangle-graph}). In both $G_1$ and $G_2$, each triangle has 2 edges with weights $2a$ and one edge with weight $3a$, where $a:=n^2$. The weights of edges in the clique are $4a$ and $a$ in $G_1$ and $G_2$, respectively.
	
	Neumann and Wegener proved that \EA with bit-string representation, which flips each bit with probability $1/m$ and is initialized with a random graph, finds the MST of the triangular-tailed graphs in $\Omega(n^4\log{n})$ expected time \cite{DBLP:journals/tcs/NeumannW07}, \ie the triangular-tailed graph has been used as the worst case example to prove the lower bound. This bound is proven for a fitness function that prevents the algorithm to accept solutions other than spanning trees after achieving the first spanning tree. Moreover, the most time consuming phase in their proof is finding the MST from an achieved spanning tree. Hence, their proof also holds even if \EA is initialized with a spanning tree.
	
	Using the same worst case example, we prove that \EAUM finds the MST in $\Theta(n^2\log{n})$. Afterwards, we improve this bound for graph $G_1$, in which the edges of $G^T$ are lighter than the edges of the clique, by enhancing the biased mutation in \EABM. Inspired by the study of Raidl \etal \cite{DBLP:journals/tec/RaidlKJ06}, we use the ranking strategy to perform the biased mutation. To this aim, we assign rank $r$, $1\leq r\leq |E|$, to each edge based on its placement in ascending order of the weights, ties are broken uniformly at random. For each edge $e\in E$ with rank $r$, we approximate the probability of $e$ to appear in the MST with $p(r) = a^r$. Then, we set
	$$
	q(e)=q_{\mathrm{b}}(e)=\frac{\sqrt{p(r)}}{\sum_{i=1}^m \sqrt{p(r)}}\text{,}
	$$
	as the probability of selecting $e$ for the mutation step, where $a=\frac{n-1}{n}$. We show that \EABM finds the MST of $G_1$ in expected time $\Theta(n\log{n})$. However, it takes exponential time for \EABM to find the MST of $G_2$, in which the edges of $G^T$ are heavier than the edges of $G^C$. In the following proofs, let $b = B(T)$ denote the set of bad selected edges in the tail of solution $T$, which have weight $3a$.
	
	\begin{lemma}\label{lem:increase_b}
		\EABM and \EAUM do not increase the value of $b$ during the optimization process.
	\end{lemma}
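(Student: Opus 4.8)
The plan is to reduce the claim to a monotonicity property of the fitness. I will show that for every spanning tree $T$ of the triangular-tailed graph one has $w(T) = C + a\,b$, where $b = |B(T)|$ and $C$ is a constant that does not depend on $T$. Since $a>0$ and since both \EABM and \EAUM replace the current solution $T$ by an offspring $T'$ only when $w(T') \le w(T)$, this identity forces $b(T') \le b(T)$ whenever an update happens, which is exactly the statement of the lemma. A pleasant side effect of arguing through $w(\cdot)$ is that the internal structure of a mutation step becomes irrelevant: even though a step consists of $k = 1 + \Pois(1)$ consecutive edge exchanges and the intermediate trees may temporarily carry a larger $b$, only the final offspring is compared against $T$, so a bound on $b$ of the \emph{accepted} tree is all we need.

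To obtain the weight decomposition I would argue in three short steps. (i) The tail $G^T$ is attached to the clique $G^C = K_\nu$ through a single cut edge, so that edge belongs to every spanning tree and, moreover, any spanning tree $T$ of $G$ restricts to a spanning tree of $G^C$; hence $T$ contains exactly $\nu - 1$ clique edges, and since all of them carry the same weight ($4a$ for $G_1$, $a$ for $G_2$) the clique-plus-cut-edge part of $w(T)$ equals a fixed constant. (ii) Symmetrically, $T$ restricts to a spanning tree of $G^T$; since $G^T$ consists of $\eta$ edge-disjoint triangles glued along a path, such a spanning tree has $|V(G^T)| - 1 = 2\eta$ edges, and because no tree can contain all three edges of a triangle while the triangles are edge-disjoint, the only possibility is exactly two tree edges per triangle. (iii) A triangle that contributes two $2a$-edges adds $4a$ to $w(T)$, whereas a triangle that contributes its $3a$-edge adds $2a + 3a = 5a$; summing over the $\eta$ triangles, the tail part of $w(T)$ equals $4a\eta + a\,b$. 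Combining (i)--(iii) gives $w(T) = C + a\,b$ as desired.

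The only step that needs a little care is (ii): one has to be sure that every spanning tree selects exactly two edges in each triangle, but this is forced by the global edge count $2\eta$ together with the fact that the triangles are edge-disjoint and each can supply at most two edges to an acyclic subgraph (the apex vertex of each triangle has degree two in $G$, which also makes the structure transparent). Once the decomposition $w(T) = C + a\,b$ is in place, plugging it into the acceptance test $w(T') \le w(T)$ immediately yields $b(T') \le b(T)$, establishing the lemma for both graph variants $G_1$ and $G_2$ and for both \EABM and \EAUM at once.
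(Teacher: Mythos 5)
Your proof is correct, and it packages the key fact differently from the paper. The paper argues \emph{incrementally}: it classifies the possible effects of the $k$ edge exchanges (clique-internal swaps and $2a$-for-$2a$ swaps leave both $w$ and $b$ unchanged; within-triangle $2a$/$3a$ swaps change $w$ by $\pm a$ and $b$ by $\pm 1$ in lockstep), concludes $w(T')-w(T)=a\bigl(|B(T')|-|B(T)|\bigr)$, and then invokes the acceptance condition. You instead prove the \emph{global} invariant $w(T)=C+a\,b$ for every spanning tree $T$, from which the same conclusion is immediate. The two proofs rest on the identical core observation, but your route buys something real: the paper's case analysis implicitly assumes that every exchange stays within the clique or within a single triangle (true, because the unique cycle created by an inserted edge never straddles the two parts, but left unargued), and it must be checked against arbitrary intermediate trees produced during the $k$ sub-steps. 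Your invariant holds for every spanning tree unconditionally, so only the final accepted offspring matters and the internal structure of the mutation is irrelevant, as you note. Your steps (i)--(iii) are sound: the tail meets the clique in a single vertex (or cut edge -- the figure suggests a shared vertex, but this only shifts the constant $C$), so the tree restricted to the clique is a spanning tree of $K_{\nu}$ with $\nu-1$ equal-weight edges; the tail part then has $2\eta$ edges, and edge-disjointness plus acyclicity forces exactly two edges per triangle; and the per-triangle contribution is $4a$ or $5a$ according to whether the $3a$ edge is present. This is a clean, slightly more careful rendering of the lemma.
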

	\begin{proof}
		Let $T'$ be the result of $k$ subsequent edge insertions into $T$ by mutation. Any changes in the structure of the solution in $G^C$ does not change the weight and neither $b$. It is similar when an edge with weight $2a$ is added and the other edge with weight $2a$ is removed from the cycle. Therefore, we only consider the number of changes in $G^T$ that the swap between $2a$ and $3a$ edges happen in the same triangle. Let $b_{\mathrm{i}}$ and $b_{\mathrm{d}}$ denote the number of swaps that increase and decrease $w(T)$, respectively. We have $|B(T')| = |B(T)| + b_{\mathrm{i}}-b_{\mathrm{d}}$ and $w(T')=w(T)+a(b_{\mathrm{i}}-b_{\mathrm{d}})$. On the other hand, the algorithms accept $T'$ if and only if $w(T')\leq w(T)$, which implies that $b_{\mathrm{i}}\leq b_{\mathrm{d}}$. Thus, in an accepted move, the number of bad edges added to $T$ is less than or equal to the number of added edges with weight $2a$.
	\end{proof}
	
	The following theorem considers the performance of \EAUM on triangular-tailed graphs. 
	\begin{theorem}
		\label{thm:EAUM_runtime_triangular}
		\EAUM finds the MST of triangular-tailed graph $G \in \{G_1,G_2\}$ in $\Theta(n^2\log{n})$ steps with probability $1-o(1)$.
	\end{theorem}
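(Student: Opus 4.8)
The plan is to collapse the entire analysis onto the single statistic $b=B(T)$ and then apply multiplicative drift in both directions. Since the algorithm never leaves the set of spanning trees and, in either $G_1$ or $G_2$, every spanning tree of the clique $G^C$ has the same weight, the weight of a current tree $T$ equals a fixed constant plus $2a(2\eta-b)+3a\,b$ for the tail (each of the $\eta=n/4$ triangles has two degree‑$2$ private vertices and therefore contributes exactly two of its three edges to any spanning tree); hence $T$ is an MST if and only if $b=0$. By Lemma~\ref{lem:increase_b} the value of $b$ never increases along the run, so the runtime equals the first hitting time $\tau$ of $b=0$. Note also $m=|E|=\binom{n/2}{2}+\Theta(n)=\Theta(n^2)$, and $b\le\eta=n/4$ at all times.

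For the \textbf{upper bound} I would lower‑bound the one‑step drift of $b$. Condition on $k=1$, which happens with probability $1/e$. Each bad triangle has a unique ``repairing'' edge, namely the $2a$‑edge it is currently missing; selecting it (probability $b/m$) creates exactly that triangle as the new cycle, and dropping the $3a$‑edge from it (probability $1/3$) decreases the tree weight by $a$, so the step is accepted by \EAUM. Hence $E[b_t-b_{t+1}\mid b_t=b]\ge b/(3em)$. As $b_0\le n/4$ and $m=\Theta(n^2)$, the multiplicative drift theorem gives $E[\tau]=O(m\log n)=O(n^2\log n)$, and its tail bound upgrades this to $\tau=O(n^2\log n)$ with probability $1-o(1)$.

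For the \textbf{lower bound} I first show $b_0=\Theta(n)$ with probability $1-o(1)$: for each triangle the edge missing from a uniform random spanning tree is its $3a$‑edge with probability at least a positive constant (the two $2a$‑edges are interchangeable under the automorphism fixing the rest of $G$, which pins the constant), and since the edge indicators of a uniform random spanning tree are negatively associated, a Chernoff/Hoeffding bound yields concentration of $b_0$ around its $\Theta(n)$ mean. Next I upper‑bound the drift: the expected number of repairs among the $k$ insertions of one iteration is $O(b/m)$ (using that $k-1\sim\Pois(1)$ has all moments), so $E[b_t-b_{t+1}\mid b_t=b]=O(b/m)$; moreover a single iteration decreases $b$ by at most $k$, which stays $O(\log n/\log\log n)$ throughout the first $\Theta(n^2\log n)$ iterations with probability $1-o(1)$ by a Poisson tail bound and a union bound. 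Feeding these two facts into a multiplicative‑drift lower‑bound theorem --- or, equivalently, tracking the potential $\Phi_t=\ln(1+b_t)$, which drops by only $O(1/m)$ per step in expectation while $b_t$ is not polylogarithmic, together with an Azuma‑type concentration for the resulting near‑submartingale --- shows that $b_t\ge1$ for all $t\le c\,n^2\log n$ with probability $1-o(1)$, for a suitable constant $c>0$.

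Together the two estimates give $\tau=\Theta(n^2\log n)$ with probability $1-o(1)$. I expect the \textbf{main obstacle} to be the lower bound, and specifically the logarithmic factor: obtaining $\Omega(n^2\log n)$ rather than merely $\Omega(n^2)$ forces one to control the multi‑step jumps produced when $k>1$ and to run a concentration argument for the near‑multiplicative process; the random‑initialization estimate $b_0=\Theta(n)$ is a secondary technical point that leans on a basic negative‑association property of uniform random spanning trees.
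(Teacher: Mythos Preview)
Your upper bound is essentially the paper's: both lower-bound the one-step drift of $b$ by $b/(3em)$ (the paper sums the resulting waiting times directly; you invoke the multiplicative drift theorem with its tail bound, which is equivalent).

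For the lower bound the paper takes a different, more elementary route. Instead of a drift lower bound, it observes that fixing the $b_0=\Theta(n)$ bad triangles is a coupon-collector process over $\Theta(m)=\Theta(n^2)$ positions, which already gives $\Omega(n^2\log n)$ with probability $1-o(1)$ via the classical coupon-collector lower tail, \emph{provided} only one edge is inserted per iteration. Multi-edge iterations are then handled by a direct count: the probability that an iteration selects $k\ge 2$ tail edges is $\Theta(n^{-k})$, so within $\Theta(n^2\log n)$ iterations only $O(\log n)$ such iterations occur in expectation, erasing at most $O(\log n)$ additional bad triangles --- negligible against $b_0=\Theta(n)$. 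Your multiplicative-drift lower bound (or the $\ln(1+b_t)$ potential with Azuma) should also go through, but the large-jump hypotheses of those theorems need genuine verification, especially when $b$ is small; the paper's argument sidesteps this by isolating the multi-step contribution explicitly, which is what buys the $\log n$ factor cleanly.

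Two minor corrections on the initialization. The triangles share only cut vertices and are therefore distinct blocks of $G$; in a uniform random spanning tree the edge removed from each block is independent and uniform over its three edges, so each triangle is bad with probability exactly $2/3$ independently, and a direct Chernoff bound (as the paper does) suffices --- negative association is not needed. Also, interior triangles have only one private degree-$2$ vertex, not two; the correct reason each triangle contributes exactly two edges to any spanning tree is this block structure.
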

	\begin{proof}
		Here, we follow the proof of Claim 10 in \cite{DBLP:journals/tcs/NeumannW07}. Note that we can focus on $G^T$ since the initial solution is a random spanning tree and all weights in $G^C$ are equal. Moreover, the MST contains all $2a$ edges and no $3a$ edge. Since \EAUM does not increase $b$ (Lemma \ref{lem:increase_b}), we need to calculate the expected time to achieve $b=0$. In order to reduce $b$ by one, the algorithm needs to insert a $2a$ edge and remove the $3a$ edge from the resulting cycle. The probability of adding only one edge is the probability of zero events in the Poisson distribution, which is equal to $e^{-1}$, and there are $b$ specific $2a$ edges that need to be added. Since the maximum size of a consequent cycle is $3$, removing the $3a$ edge happens with the constant probability $1/3$. Hence, the probability of swapping a $2a$ edge with the $3a$ edge in a required triangle is $b/(3em)$ 
		that happens in expected time $3em/b$ by the waiting-time argument. 
		Let $T_{\EAUM}$ denote the first hitting time that \EAUM finds the MST. Since $b$ is at most $n$, we obtain the following upper bound on the expected time with probability $1-o(1)$
		\begin{align*}\label{equ:bound}
		E[T_{\EAUM}] &\leq
		\sum_{k=1}^{n} 3e\cdot\frac{m}{k} \leq 3en^2H_n\\
		&\leq 3en^2(\log n+1) = O(n^2\log n)\text{.}
		\end{align*}
		
		Now we prove the lowe bound.	Similar to the argument in the proof of the coupon collector's theorem (see, e.~g., \cite{motwani_raghavan_1995}), the lower bound $3en^2(\log n+1)-cn^2$ holds with the probability $1-e^{-e^c}$, if \EAUM only adds one edge in each iteration. Setting $c=\frac{\log n}{2}$, the lower bound for the expected time is $\Omega(n^2\log{n})$ with probability $1-o(1)$. 
		Let $k$-step refer to the iterations that $k$ triangle edges are chosen for the mutation step and note that $k\leq3n/4$. It is enough to bound the contribution of $k$-steps on $b$ during $\alpha n^2\log{n}$ iterations for a constant $\alpha>0$.
		The probability of a $k$-step for a constant $k\geq 1$ is $$p^\text{UM}_k=\frac{e^{-1}}{(k-1)!}\cdot{3n/4 \choose k}\cdot \left(\frac{1}{m}\right)^k = \theta(n^k m^{-k}) = \theta(n^{-k})\text{,}$$
		where the first term is the probability of $k-1$ events in the Poisson distribution with $\lambda = 1$. Note that \EAUM always adds at least one edge and $p_0^{\text{UM}} = 0$. Within $\Theta(n^2\log n)$ iterations, the expected number of 2-steps is $O(\log{n})$ and there are $o(1)$ $k$-steps with $k>2$. Each 2-step reduces $b$ by at most 2. On the other hand, in a random spanning tree, each triangle contains a bad edge with probability $2/3$. Thus, $b$ is at least $n/8=\Theta(n)$ with probability $1-e^{-\Omega(n)}$, using a Chernoff bound with $\delta = 2/8$. Hence, with the probability $1-o(1)$, the expected time for \EAUM to find the MST is
		\begin{align*}
		E[T_{\EAUM}] &= \Theta\left( \sum_{k=1}^{b-2\log{n}} \frac{m}{k} \right)=\Theta\left(\sum_{k=1}^{b} \frac{n^2}{k}-\sum_{k=b-2\log{n}}^{b} \frac{n^2}{k} \right)\\
		&= \Theta\left(\sum_{k=1}^{b} \frac{n^2}{k}-\sum_{k=1}^{2\log{n}} \frac{n^2}{k} \right)= \sum_{k=1}^{\Theta(n)} \frac{n^2}{k}-\sum_{k=1}^{O(\log{n})} \frac{n^2}{k}\\
		&= \Theta(n^2\log n)-O(n^2\log\log n) = \Theta (n^2\log n).
		\end{align*}
	\end{proof}
	
	Now, we consider the performance of \EABM on the graphs $G_1$ and $G_2$.
	\begin{lemma}\label{lem:O(n)-probability}
		Using the biased mutation with probability $q_{\mathrm{b}}(e)$, the probability of selecting edge $e$ with rank $r=O(n)$ is $\Theta(1/n)$.
	\end{lemma}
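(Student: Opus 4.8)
The plan is to read $q_{\mathrm{b}}(e)$ as an explicit ratio and estimate its numerator and denominator separately. Write $a = (n-1)/n = 1-1/n$, so that $\sqrt{p(r)} = a^{r/2} = (\sqrt{a})^{r}$. Since the ranks of the $m$ edges form a permutation of $\{1,\dots,m\}$, the normalising constant is the truncated geometric sum $Z := \sum_{i=1}^{m}\sqrt{p(i)} = \sum_{i=1}^{m}(\sqrt{a})^{i} = \sqrt{a}\cdot\frac{1-(\sqrt{a})^{m}}{1-\sqrt{a}}$, and hence $q_{\mathrm{b}}(e) = (\sqrt{a})^{r}/Z$. It therefore suffices to show $(\sqrt{a})^{r} = \Theta(1)$ whenever $r = O(n)$ and $Z = \Theta(n)$.

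For the denominator, the one fact that drives everything is the elementary identity $1-\sqrt{a} = 1-\sqrt{1-1/n} = \frac{1/n}{1+\sqrt{1-1/n}} = \Theta(1/n)$, because the factor $1+\sqrt{1-1/n}$ lies in $[1,2]$ for all $n\ge 1$. In addition $\sqrt{a} = \Theta(1)$ (it lies in $[1/\sqrt{2},1)$ for $n\ge 2$), and since any graph admitting a spanning tree has $m\ge n-1$ edges we get $(\sqrt{a})^{m} \le (1-1/n)^{(n-1)/2} \le e^{-1/4} < 1$, so that $1-(\sqrt{a})^{m} = \Theta(1)$. Substituting these three estimates into the closed form yields $Z = \Theta(1)\cdot\Theta(1)/\Theta(1/n) = \Theta(n)$; the same bound also follows more crudely from $\sum_{i=1}^{n-1}(\sqrt{a})^{i}\le Z\le\sum_{i=1}^{\infty}(\sqrt{a})^{i} = \sqrt{a}/(1-\sqrt{a})$.

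For the numerator, suppose $r\le cn$ for a constant $c$. Then $1\ge(\sqrt{a})^{r} = (1-1/n)^{r/2}\ge(1-1/n)^{cn/2}$, and by standard estimates for $(1-1/n)^{\Theta(n)}$ this last quantity is a positive constant (e.g.\ at least $e^{-c}$), so $(\sqrt{a})^{r} = \Theta(1)$; the upper bound is immediate since $r\ge 1$. Combining the two parts, $q_{\mathrm{b}}(e) = \Theta(1)/\Theta(n) = \Theta(1/n)$.

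I expect no real obstacle here --- this is a warm-up lemma feeding the later runtime analysis. The only places deserving a line of care are the identity $1-\sqrt{1-1/n} = \Theta(1/n)$ (where the square root hides a factor of roughly $2$) and the observation that $m\ge n-1$ is exactly what keeps $1-(\sqrt{a})^{m}$ bounded away from $0$, so that the geometric series genuinely contributes the extra factor of $n$ rather than collapsing to $\Theta(1)$.
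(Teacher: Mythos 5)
Your proposal is correct and follows essentially the same route as the paper: evaluate the normalising geometric sum in closed form to get $\Theta(n)$ via $1-\sqrt{1-1/n}=\Theta(1/n)$, and bound the numerator $(1-1/n)^{r/2}$ below by a positive constant when $r=O(n)$. Your treatment of the tail term is in fact marginally cleaner -- you only need $m\ge n-1$ to get $1-(\sqrt{a})^{m}=\Theta(1)$, whereas the paper's $(1-o(1))$ step implicitly uses $m=\omega(n)$ (which holds here since $m=\Theta(n^2)$) -- but this does not change the argument.
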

	\begin{proof}
		Considering the denominator of $q_{\mathrm{b}}(e)$, we have
		\begin{align}\nonumber
		\sum_{i=1}^m{a^{i/2}}&=\frac{\sqrt{a}-a^{(m+1)/2}}{1-\sqrt{a}}= \frac{(1-o(1))\cdot\sqrt{a}}{1-\sqrt{a}}= \frac{(1-o(1))\cdot\sqrt{n-1}}{\sqrt{n}-\sqrt{n-1}}=\Theta(n)\text{.}
		\end{align}
		Since $r = O(n)$, for the numerator we have $1\geq(1-\frac{1}{n})^{r/2}\geq (1-\frac{1}{n})^{cn}\geq e^{-c'}$, where $c<c'$ are constants. We conclude that $q_{\mathrm{b}}(e)=\frac{1-o(1)}{2e^{c'}n}=\Theta(1/n)$.
	\end{proof}
	
	Lemma \ref{lem:O(n)-probability} shows that the edges of $G^T$ in $G_1$ are more likely to be chosen in \EABM than in \EAUM. In the following theorem, we show the effect of this property on the performance of \EABM.
	
	\begin{theorem}\label{thm:EABM_runtime_triangular}
		\EABM finds the MST of $G_1$ in $\Theta(n \log{n})$ with probability $1-o(1)$.
	\end{theorem}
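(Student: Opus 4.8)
The plan is to follow the blueprint of the proof of Theorem~\ref{thm:EAUM_runtime_triangular} and reduce everything to the single quantity $b = B(T)$, the number of bad ($3a$) edges in the tail. First I would record the key structural facts about $G_1$: in $G_1$ the weight of every spanning tree equals a fixed constant plus $a\,b$, because the clique contributes a fixed number of edges all of weight $4a$ and every triangle of $G^T$ contributes exactly two edges (either two $2a$ edges, or its $3a$ edge together with the remaining $2a$ edge, the latter case contributing one unit to $b$). Hence the MST is reached precisely when $b = 0$; a mutation that repairs one bad triangle (insert the unique missing $2a$ edge, then delete the $3a$ edge from the induced $3$-cycle) strictly decreases the weight and is accepted; and by Lemma~\ref{lem:increase_b} $b$ never increases. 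So it suffices to bound the number of iterations until $b=0$, and $G^C$ can be ignored entirely (moves inside the clique are weight-neutral and do not change $b$).

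For the upper bound, suppose $b = j \ge 1$. I would lower-bound the per-iteration probability of a progress step by the event that the mutation inserts exactly one edge (Poisson variable equal to $0$, probability $e^{-1}$), that this edge is the missing $2a$ edge of one of the $j$ bad triangles, and that the edge then deleted from the induced cycle --- which is exactly that triangle --- is its $3a$ edge (probability $1/3$). Each of the $j$ relevant $2a$ edges has rank $O(n)$ in $G_1$, hence is selected with probability $\Theta(1/n)$ by Lemma~\ref{lem:O(n)-probability}, so the progress probability is $\Theta(j/n)$. By the waiting-time argument, decreasing $b$ below $j$ takes $O(n/j)$ iterations in expectation, and since initially $b \le \eta = n/4$,
\begin{equation*}
E[T_{\EABM}] \le \sum_{j=1}^{n/4} O\!\left(\frac{n}{j}\right) = O\!\left(n H_{n/4}\right) = O(n\log n)\text{.}
\end{equation*}
Since the process is, up to constant factors, a coupon-collector process with $\Theta(n)$ coupons (each bad triangle is repaired at rate $\Theta(1/n)$ per iteration), the same standard concentration argument used in Theorem~\ref{thm:EAUM_runtime_triangular} upgrades this to $O(n\log n)$ with probability $1-o(1)$.

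For the lower bound, I would first argue $b = \Theta(n)$ initially with probability $1 - e^{-\Omega(n)}$: exactly as in Theorem~\ref{thm:EAUM_runtime_triangular}, each triangle contains its bad edge with probability $2/3$ in a random spanning tree, and a Chernoff bound (edge inclusions in a uniform spanning tree being negatively associated) gives $b \ge n/8$. It then remains to show that repairing $\Theta(n)$ bad triangles takes $\Omega(n\log n)$ iterations. I expect the main obstacle to be that, unlike in the \EAUM analysis, in \EABM a mutation that selects $k \ge 2$ tail edges occurs with constant (not $o(1)$) probability, so a single iteration could in principle repair several bad triangles at once. To handle this I would observe that (i) a fixed bad triangle is repaired in a given iteration only if one of the drawn edges is its specific missing $2a$ edge; since each draw picks that edge with probability $\Theta(1/n)$ and the expected number of draws is $1 + E[\Pois(1)] = \Theta(1)$, the per-iteration repair probability of any fixed bad triangle is $\Theta(1/n)$; and (ii) whenever $b = o(n)$ --- a regime that still spans a $\Theta(\log n)$-length portion of the run --- the probability that one iteration repairs two or more triangles is $O(b^2/n^2) = o(b/n)$, so repairs happen one at a time with probability $1-o(1)$. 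Plugging these two bounds into the standard coupon-collector lower bound (as invoked for Theorem~\ref{thm:EAUM_runtime_triangular}) yields $\Omega(n\log n)$ with probability $1-o(1)$, completing the argument.
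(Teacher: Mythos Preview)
Your proposal is correct and follows essentially the same route as the paper: the upper bound is identical (replace the $1/m$ selection probability by $\Theta(1/n)$ via Lemma~\ref{lem:O(n)-probability} and sum waiting times), and the lower bound is the same coupon-collector argument on the $\Theta(n)$ specific missing $2a$-edges. One remark: your observation~(ii) about double repairs is not needed. Observation~(i) already says that repairing a fixed bad triangle in a given iteration \emph{requires} drawing its specific missing $2a$-edge, which happens with probability $O(1/n)$ per iteration; this upper bound on the per-iteration ``collection'' probability of each of the $\Theta(n)$ coupons is all the coupon-collector lower bound uses, regardless of how many triangles could in principle be repaired simultaneously. The paper proceeds exactly this way and omits any analogue of your~(ii).
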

	\begin{proof}
		The proof is analogous to the proof of Theorem~\ref{thm:EAUM_runtime_triangular}. However, we use Lemma~\ref{lem:O(n)-probability} to tighten the probability of selecting edges from $G_T$. Hence, the expected waiting for the beneficial event in which a bad edge is removed from the tail is $\Theta(n/b)$. Thus, we obtain an upper bound of $O(n\log n)$. 
		
		To prove the lower bound, similar to the proof of Theorem~\ref{thm:EAUM_runtime_triangular}, we use the argument of coupon collector's theorem with a similar approach used in \cite{DBLP:journals/tcs/DrosteJW02}. However, it must be noted that we argue on the minimum number of edge selections such that all the bad edges are chosen for the mutation at least once. According to Lemma \ref{lem:O(n)-probability}, the probability of selecting an edge in $G^T$ is at least $1/cn$ for a constant $c$. Moreover, we have the initial number of bad edges is at least $n/8$ after the random initialization with probability $1-o(1)$. Note that \EABM selects at least one edge in each iteration.
		
		Therefore, $(1-1/cn)^t$ is the probability of no triangle edge is selected after $t$ iterations. Consequently, the probability of flipping at least one triangle edge in $t$ iterations is $1-(1-1/cn)^t$ that implies $(1-(1-1/cn)^t)^{n/8}$ is the probability of selecting all of the $n/8$ bad edges at least once. Hence, the probability that at least one bad edge has never been selected in $t$ iterations is $1-(1-(1-1/cn)^t)^{n/8}$. Finally, the probability that \EABM does not attempt to remove at least one bad edge in $t=(n-1)\ln{n}$ steps is $1-(1-(1-1/cn)^{(n-1)\ln{n}})^{n/8}\geq 1-e^{-1/8c}$.
		
		Therefore, \EABM needs $\Omega(n\log{n})$ iterations to find the MST with probability of $1-e^{-1/8c}-o(1)$, which completes the proof.
	\end{proof}
	
	Although \EABM efficiently finds the MST of $G_1$, the next argument shows that, in graphs similar to $G_2$, finding the MST takes exponential time.
	
	\begin{lemma}\label{lem:exp_small_probability}
		The probability of selecting an edge with rank $r=\Omega(n^2)$ is exponentially small.
	\end{lemma}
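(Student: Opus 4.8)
The plan is to estimate $q_{\mathrm{b}}(e) = a^{r/2} / \sum_{i=1}^m a^{i/2}$ directly for an edge $e$ of rank $r = \Omega(n^2)$, bounding numerator and denominator separately, in the same spirit as the proof of Lemma~\ref{lem:O(n)-probability} (here $a = \frac{n-1}{n}$, the constant appearing in the definition of $q_{\mathrm{b}}$, not the edge-weight scale $n^2$ used elsewhere).

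First I would handle the denominator by reusing the geometric-sum computation already done for Lemma~\ref{lem:O(n)-probability}: $\sum_{i=1}^m a^{i/2} = \frac{\sqrt{a} - a^{(m+1)/2}}{1 - \sqrt{a}} = \Theta(n)$. Since we only need an \emph{upper} bound on $q_{\mathrm{b}}(e)$, it suffices to lower-bound this sum, e.g.\ by its first term $\sqrt{a} = \sqrt{(n-1)/n} \geq 1/\sqrt{2}$ for $n \geq 2$; so the denominator is $\Omega(1)$.

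Next I would bound the numerator. Writing $r \geq c n^2$ for a constant $c > 0$, we have $a^{r/2} = (1 - 1/n)^{r/2} = ((1-1/n)^n)^{r/(2n)} \leq e^{-r/(2n)} \leq e^{-cn/2}$, using the standard inequality $(1 - 1/n)^n \leq e^{-1}$. Combining the two estimates yields $q_{\mathrm{b}}(e) \leq \sqrt{2}\, e^{-cn/2} = e^{-\Omega(n)}$, which is exponentially small, proving the claim.

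There is essentially no hard step here: the argument is a short direct calculation. The only points deserving a little care are the bookkeeping around the overloaded symbol $a$, and making sure the lower bound invoked for the denominator holds for all sufficiently large $n$ (which it does, since it is already shown to be $\Theta(n)$ in the proof of Lemma~\ref{lem:O(n)-probability}).
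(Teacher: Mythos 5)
Your proposal is correct and follows essentially the same route as the paper: reuse the denominator estimate from Lemma~\ref{lem:O(n)-probability} and bound the numerator by $\left(1-\frac{1}{n}\right)^{r/2} \leq e^{-cn/2}$ for $r \geq cn^2$. Your version is marginally more careful in noting that only a lower bound on the denominator is needed, and in writing the conclusion as $e^{-\Omega(n)}$ rather than the paper's $O(e^{-n})$, which strictly speaking presumes $c \geq 2$.
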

	\begin{proof}
		According to the proof of Lemma \ref{lem:O(n)-probability}, it is enough to show that the enumerator of $p(r)$ is exponentially small when $r>cn^2$ for some constant $c$. To this aim, we have
		$$
		\left(1-\frac{1}{n}\right)^{\frac{r}{2}} \leq \left(1-\frac{1}{n}\right)^{\frac{c}{2}n^2}\leq e^{-\frac{c}{2}n}=O(e^{-n})\text{.}
		$$
	\end{proof}
	\begin{theorem}
		The expected time for \EABM to find the MST of $G_2$ is exponential.
	\end{theorem}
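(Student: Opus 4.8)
The plan is to leverage Lemma~\ref{lem:increase_b}, which forbids $b$ from ever increasing, together with Lemma~\ref{lem:exp_small_probability}: in $G_2$ every edge whose insertion could possibly decrease $b$ has rank $\Omega(n^2)$ and is therefore selected only with exponentially small probability, so no single iteration makes progress towards the MST except with exponentially small probability. Combined with the fact that a random initial spanning tree has $b \geq 1$ with probability $1-o(1)$, this forces an exponential expected runtime.

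\emph{Step 1 (rank structure of $G_2$).} The clique $G^C = K_\nu$ on $\nu = n/2$ vertices contributes $\binom{n/2}{2} = \Theta(n^2)$ edges, all of weight $a$ in $G_2$, hence strictly lighter than every tail edge (weight $2a$ or $3a$). Consequently each of the $3n/4$ edges of the tail $G^T$ receives a rank $r \geq \binom{n/2}{2} = \Omega(n^2)$. By Lemma~\ref{lem:exp_small_probability}, under $q_{\mathrm b}$ every individual tail edge is selected with probability $O(e^{-\Omega(n)})$, so a union bound over the $O(n)$ tail edges shows that a single edge-selection step picks \emph{some} tail edge with probability $O(e^{-\Omega(n)})$. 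Crucially, this bound is valid in every iteration, since the ranks, and hence the probabilities $q_{\mathrm b}(\cdot)$, are fixed once and for all at the start of \EABM.

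\emph{Step 2 (only tail insertions can change $b$).} Since all clique edges have equal weight and $G^C$ is attached to $G^T$ only through the unique bridge edge joining the two parts (which lies in every spanning tree), inserting a clique edge creates a cycle entirely inside $G^C$ and cannot remove a $3a$ edge. As already observed in the proof of Lemma~\ref{lem:increase_b}, the only operation that alters $b$ is a swap inside a tail triangle, which requires the currently missing $2a$ edge of that triangle to be inserted. Combining this with Step~1 and a union bound over the $k = 1 + \Pois(1)$ insertions performed in one iteration — using $E[k]=2$ to control the Poisson tail, so that the per-iteration bound, not merely the per-insertion bound, stays $O(e^{-\Omega(n)})$ — we conclude that the probability that a given iteration changes $b$ at all is $O(e^{-\Omega(n)})$, uniformly in the current tree.

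\emph{Step 3 (putting it together).} Arguing exactly as in the proof of Theorem~\ref{thm:EAUM_runtime_triangular}, a random initial spanning tree contains a $3a$ edge in each triangle with probability $2/3$, so a Chernoff bound yields $b \geq n/8$, in particular $b \geq 1$, with probability $1 - e^{-\Omega(n)}$. On this event, reaching the MST of $G_2$ requires $b$ to reach $0$, hence at least one iteration that decreases $b$; by Step~2 the number of iterations before the first such event stochastically dominates a geometric random variable with success probability $O(e^{-\Omega(n)})$, whose expectation is $2^{\Omega(n)}$. Therefore $E[T_{\EABM}] \geq (1-o(1)) \cdot 2^{\Omega(n)} = 2^{\Omega(n)}$, which is exponential. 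The only delicate point is the "uniformly in the current tree'' claim in Step~2: the tree evolves over time, but the edge-selection distribution $q_{\mathrm b}$ never does, so the per-iteration progress probability can be bounded in a genuinely state-independent way — and one must make sure this bound survives the union bound over the random number $k$ of insertions rather than just a single insertion, which is exactly where the Poisson-tail estimate is needed.
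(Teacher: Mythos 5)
Your proof is correct and follows essentially the same route as the paper: in $G_2$ all tail edges have rank $\Omega(n^2)$, so by Lemma~\ref{lem:exp_small_probability} they are selected only with exponentially small probability, and since reducing $b$ requires inserting a tail edge, the waiting time is exponential. Your write-up is somewhat more careful than the paper's (handling the Poisson number of insertions per iteration, the initial condition $b\geq 1$, and the geometric waiting-time bound explicitly), but the underlying argument is the same.
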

	\begin{proof}
		In $G_2$, edges of $G^T$ have higher weights than the edges of $G^C$. Since there are $\Omega(n^2)$ edges in $G^C$, the rank of edges of $G^T$ is $\Omega(n^2)$. Using the result of Lemma \ref{lem:exp_small_probability}, the probability of selecting any of the edges of $G^T$ is $O(e^{-n})$. Hence, the expected time to select each of these edges for the mutation step is $\Omega(e^n)$. This implies that, in expectation, \EABM needs exponential time to reduce the value of $b$ by one; consequently, it needs exponential time to find the MST of $G_2$.
	\end{proof}
	
	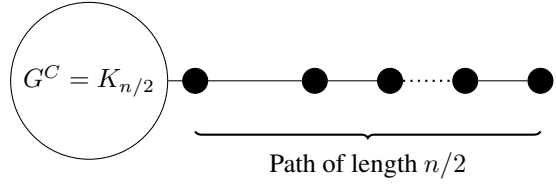
\begin{figure}[t]
		\centering
		\begin{tikzpicture}
    \begin{scope}
        \node[circle, draw, minimum size=5.5em] (Kn) at (0, 0) {$G^C = K_{n/2}$};
        \node[circle, draw, fill = black] (v0)  at (1.41, 0) {};
        \node[circle, draw, fill = black] (v1)  at (3, 0) {};
        \node[circle, draw, fill = black] (v2)  at (4, 0) {};
        \node[circle, draw, fill = black] (vn1) at (5, 0) {};
        \node[circle, draw, fill = black] (vn)  at (6, 0) {};
                
        \draw (Kn) edge[-] node[above] {} (v1);
        \draw (v1) edge[-] node[above] {} (v2);
        \draw (v2) edge[dotted, thick] node[above] {} (vn1);
        \draw (vn1) edge[-] node[above] {} (vn);
        
        \draw [thick, decoration={brace, mirror, raise=0.5cm}, decorate] 
            (v0.south) -- (vn.south) node [pos=0.5, anchor=north ,yshift=-0.65cm] {Path of length $n/2$}; 
    \end{scope}
\end{tikzpicture}
		\caption{Worst case graph for random initialization in the setting of bit-representation.}
		\label{fig:lollypop-graph}
	\end{figure}
	Before we continue with a result on arbitrary graphs we make a short trip into another solution encoding. Let $\mathcal{A}$ refer to the \EA that uses a bit-string representation of the edges instead of spanning trees. Consider the \emph{lollipop graph} presented in Figure~\ref{fig:lollypop-graph} which consists of a clique with $n/2$ vertices and a path of length $n/2$ connected to it. Let all the edges of the clique have lower weights than all the edges of the path. Therefore, the rank of edges in the path is $\Omega(n^2)$ and have $q_b(e)=O(e^{-n})$. Creating a random sub-graph from the lollipop graph, the number of chosen edges from the tail is at most $2n/3$ with probability $1-o(1)$. The lollipop graph illustrates that it is essential for $\mathcal{A}$ to be initialized with a spanning tree. Otherwise, it takes exponential time for it to find even a connected graph.
	
	In the following, we analyze the effect of using both mutation strategies simultaneously in \EAMM. Note that in every $t$ iterations, \EAMM performs $t/3$ uniform mutations and $t-t/3$ biased mutations with probability of $1-o(1)$. This implies that repeating \EAMM $c\geq4$ times, the results of Theorems~\ref{thm:EAUM_runtime_triangular} and \ref{thm:EABM_runtime_triangular} also hold for the \EAMM. However, since \EAMM benefits from the uniform mutation in half of the iterations, it is also able to find the MST of $G_2$ in $\Theta(n^2\log{n})$.
	
	This is the motivation to analyze the performance of \EAMM on general graphs. For arbitrary graph $G$, let $w(T^i)$ be the weight of $T^i$, the spanning tree achieved by the algorithm in iteration $i$, and $T^*$ be the minimum spanning tree. We define $$g(T^i) = w(T^i)-w(T^*)\text{,}$$ the weight gap that the algorithm needs to cover to reach the MST. Note that a MST is not necessarily unique but its weight is unique. We also redefine 1-step as an iteration that the algorithm adds only one edge and removes a random edge from the resulting cycle. Using a similar representation of Lemma 1 in \cite{DBLP:journals/tcs/NeumannW07}, the following lemma presents how 1-steps contribute to reduce the value of $g(T)$. 
	
	\begin{lemma}\label{lem:delta_g}
		Let solution $T$ be an arbitrary spanning tree. There exists a set of $k\in\{1,\cdots,n-1\}$ different accepted 1-steps that if happen in any order transform $T$ to $T^*$ and reduce $w(T)$ by $g(T)/k$ on average.
	\end{lemma}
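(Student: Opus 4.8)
The plan is to follow the scheme of Lemma~1 in~\cite{DBLP:journals/tcs/NeumannW07}. I would fix a minimum spanning tree $T^*$, assume $g(T)>0$ (the case $g(T)=0$ being vacuous), and set $k := |T^*\setminus T| = |T\setminus T^*| \in \{1,\dots,n-1\}$, both trees having $n-1$ edges. The goal is to show that the edges of $T^*\setminus T$ can be inserted into $T$ one at a time, in any order, each insertion paired with the removal of a well-chosen edge of the cycle it creates, so that every intermediate object is a spanning tree, each individual 1-step is accepted, and after all $k$ steps we reach $T^*$.

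The heart of the argument is a single exchange step: let $T'$ be a spanning tree with $T\cap T^*\subseteq T'\subseteq T\cup T^*$ and $T'\neq T^*$, and let $f=\{u,v\}$ be any edge of $T^*\setminus T'$. Inserting $f$ closes the fundamental cycle $C(f,T')\subseteq T'\cup\{f\}$. I would then delete $f$ from $T^*$, splitting $V$ into components $A\ni u$ and $B\ni v$; $f$ is the only edge of $T^*$ across the cut $(A,B)$, and by minimality of $T^*$ it is a minimum-weight edge across $(A,B)$ (a lighter crossing edge would produce a lighter spanning tree). The $T'$-path inside $C(f,T')$ joins $u\in A$ to $v\in B$, so it uses some edge $e$ crossing $(A,B)$; then $e\neq f$, hence $e\in T'$ and $e\notin T^*$, and $w(e)\ge w(f)$. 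Therefore $T'':=(T'\setminus\{e\})\cup\{f\}$ is again a spanning tree, $w(T'')=w(T')-(w(e)-w(f))\le w(T')$, it still satisfies $T\cap T^*\subseteq T''\subseteq T\cup T^*$, and $|T''\setminus T^*|=|T'\setminus T^*|-1$ since a $T^*$-edge was added and a non-$T^*$-edge removed.

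Then I would iterate this from $T'=T$, at each stage inserting an arbitrary still-missing edge of $T^*\setminus T$; the quantity $|T'\setminus T^*|$ drops by one each time, so after exactly $k$ accepted 1-steps we arrive at $T^*$, and since the inserted edge is free at every stage this holds for every order of the edges of $T^*\setminus T$. Each step removes a distinct edge of $T\setminus T^*$ (never re-inserted, as only edges of $T^*\setminus T$ are ever added) and inserts a distinct edge of $T^*\setminus T$, so telescoping the weight changes gives a total decrease of $\sum_{e\in T\setminus T^*}w(e)-\sum_{f\in T^*\setminus T}w(f)=w(T)-w(T^*)=g(T)$, \ie $g(T)/k$ on average over the $k$ steps.

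The main obstacle is the single place where minimality of $T^*$ is genuinely used: one must select from the cycle $C(f,T')$ an edge whose removal \emph{at once} prevents the weight from increasing and moves the tree one step closer to $T^*$, and the fundamental-cut argument above is designed precisely to furnish such an edge. I would also point out in the write-up that the cycle edge removed for a given $f$ may depend on which edges of $T^*\setminus T$ were inserted before it; thus the ``set of $k$ 1-steps'' is best read as the statement that, in any order, every $f\in T^*\setminus T$ admits an accepted insertion-and-removal carrying the tree one step toward $T^*$.
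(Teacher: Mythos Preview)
Your proof is correct. The paper's own proof is much shorter because it simply invokes Kano's exchange result~\cite{kano1987maximum} as a black box: Kano guarantees a \emph{fixed} bijection $\alpha\colon E(T^*)\setminus E(T)\to E(T)\setminus E(T^*)$ with $w(e)\le w(\alpha(e))$ and $\alpha(e)$ lying on the fundamental cycle of $e$ in $T$, and the paper then just asserts that performing these $k$ swaps in any order takes $T$ to $T^*$. You instead rebuild the exchange step from scratch via the fundamental-cut argument, which makes your write-up self-contained and also makes the ``any order'' claim genuinely rigorous, at the price of the removed edge depending on the insertion history (a point you correctly flag). The trade-off is that Kano's fixed bijection gives $k$ swaps all defined relative to the \emph{current} tree $T$, with improvements summing to exactly $g(T)$; this is precisely the form consumed by the multiplicative-drift argument in the next theorem, so if you adopt your dynamic construction you should note that applying your cut argument at $T'=T$ for each $f\in T^*\setminus T$ already yields $k$ accepted 1-steps from $T$, and that Kano (or a Hall-type argument on the exchange bipartite graph) is what guarantees the removals can be chosen \emph{distinct} so that the improvements sum to $g(T)$.
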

	\begin{proof}
		Let $E(T)$ and $E(T^*)$ denote the edges of $T$ and $T^*$, respectively. Using an existence proof, Kano \cite{kano1987maximum} proved that there is a bijection $\alpha:E(T^*)\setminus E(T)\rightarrow E(T)\setminus E(T^*)$ such that $w(e)\leq w(\alpha(e))$ and adding $e$ to $T$ creates a cycle that includes $\alpha(e)$. Let $k=|E(T^*)\setminus E(T)|$. Swapping all the edges $e\in E(T^*)\setminus E(T)$ with $\alpha$ transforms $T$ to $T^*$ and reduces $g(T)$ to zero. Thus, each of these good swaps decreases the value of $g(T)$ on average by $g(T)/k$. Moreover, any 1-step that does a good swap is accepted since it results in a solution that is not worse than $T$.
	\end{proof}
	Using the result of Lemma \ref{lem:delta_g} we prove a performance bound on \EAMM on arbitrary graphs.

	\begin{theorem}
		Starting from a random spanning tree, \EAMM finds the minimum spanning tree in expected time $O(n^3\log(n\cdot w_{\max}))$, where $w_{\max}$ is the maximum weight of the edges. 
	\end{theorem}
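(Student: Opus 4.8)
The plan is to apply the multiplicative drift theorem to the weight gap $g(T)$, using Lemma~\ref{lem:delta_g} to lower-bound the expected one-step decrease of $g$. The crucial observation is that \EAMM performs what I will call an \emph{unbiased 1-step} with constant probability $\tfrac{1}{2e}$: with probability $1/2$ the coin toss selects uniform edge selection, and with probability $e^{-1}$ we have $\Pois(1)=0$, so exactly one edge is added uniformly at random and one edge is deleted uniformly at random from the introduced cycle. For the upper bound I would ignore biased steps entirely; since \EAMM is elitist, a biased step that is rejected or merely replaces $T$ by a tree of equal weight never increases $g$, so that $g(T^0)\ge g(T^1)\ge\cdots$ along the run and counting only unbiased 1-steps can only underestimate the drift.

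Next I would estimate the drift. Fix a spanning tree $T$ with $g(T)>0$. By Lemma~\ref{lem:delta_g} (via Kano's bijection $\alpha$) there are $k\le n-1$ good 1-steps: for $i=1,\dots,k$, adding a specific edge $e_i$ and removing the specific edge $\alpha(e_i)$ from the resulting cycle is always accepted and decreases the weight by $w(\alpha(e_i))-w(e_i)\ge 0$, with $\sum_{i=1}^{k}\bigl(w(\alpha(e_i))-w(e_i)\bigr)=g(T)$. Conditioned on an unbiased 1-step, $e_i$ is inserted with probability $1/m$ and, since the created cycle has at most $n$ edges, $\alpha(e_i)$ is subsequently removed with probability at least $1/n$; these $k$ events are disjoint, so
\begin{align*}
  E[g(T)-g(T')\mid T] \;\ge\; \frac{1}{2e}\sum_{i=1}^{k}\frac{1}{m}\cdot\frac{1}{n}\bigl(w(\alpha(e_i))-w(e_i)\bigr) \;=\; \frac{g(T)}{2emn}\text{.}
\end{align*}
Since this bound holds in every iteration (rejected iterations leave $g$ unchanged), $\bigl(g(T^t)\bigr)_t$ exhibits multiplicative drift with rate $\delta=1/(2emn)$.

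Finally I would invoke the multiplicative drift theorem. After random initialization $g(T^0)\le w(T^0)\le (n-1)w_{\max}$, and — assuming, as is standard in this setting and as the statement implicitly presupposes by parametrising in $w_{\max}$, that edge weights are positive integers — the smallest positive value of $g$ is at least $1$. Hence the expected number of iterations until $g=0$ is at most $2emn\bigl(1+\ln((n-1)w_{\max})\bigr)=O\bigl(mn\log(n\cdot w_{\max})\bigr)$, which is $O(n^3\log(n\cdot w_{\max}))$ because $m\le\binom{n}{2}$. The only genuinely non-routine point is the $\log$ factor: it rests on a lower bound for the smallest attainable positive gap, which is exactly why an integrality (or bounded-precision) assumption on the weights is needed; without it the bound would have to be stated in terms of $g_{\min}$ rather than $w_{\max}$. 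The remaining ingredients — the $1/n$ for deleting the correct cycle edge, and the observation that possibly-harmful biased steps are harmless by elitism — are routine once made explicit.
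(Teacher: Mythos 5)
Your proof is correct and follows essentially the same route as the paper: a multiplicative drift argument on $g(T)$ with rate $\delta = 1/(2emn)$, obtained by combining the $1/2$ coin toss, the $e^{-1}$ probability of a 1-step, and the $k$ good exchanges guaranteed by Lemma~\ref{lem:delta_g}. Your write-up is in fact slightly more careful than the paper's in making explicit the $1/n$ bound for deleting the right cycle edge, the role of elitism in dismissing biased steps, and the integrality assumption on weights needed to justify the $\log(n\cdot w_{\max})$ factor in the drift theorem.
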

	\begin{proof}
		Let $\Delta(g)=g(T^{i})-g(T^{i+1})$ be the contribution of the algorithm in reducing the value of $g$ in one iteration. The probability of having a 1-step equals to the probability of having zero events in the Poisson distribution which is $1/e$. Thus, with the probability of $1/(2enm)$, the uniform strategy causes a 1-step such that a specific edge $e$ is added and a specific edge from the created cycle is removed. From Lemma \ref{lem:delta_g} there are $k$ good swaps. Therefore, the probability of a good swap in a 1-step with uniform strategy is $k/(2enm)$. Since a good swap reduces the value of $g(T)$ on average by $g(T)/k$, for $\Delta(g)$ we have
		$$E[\Delta(g)] = \frac{g(T)}{k}\cdot\frac{k}{2enm}.$$
		
		Since the the maximum value of $g(T)$ is $n\cdot w_{\max}$, using the multiplicative drift theorem \cite{DBLP:journals/algorithmica/DoerrG13} with $\delta = 1/(2enm)$, the expected first hitting time that $g(T) =0$ is upper bounded by $$\frac{\ln(n\cdot w_{\max})+1}{\left(\frac{1}{2enm}\right)}= O(n^3\log(n\cdot w_{\max}))\text{.}$$\end{proof}
	Although \EAMM guarantees a polynomial expected time to find $T^*$ for any arbitrary graph, experiments by Raidl et al. showed that in many random graphs, all the edges of $T^*$ have rank $O(n)$. This implies that the expected time for \EAMM to find the MST improves to $O(n^2\log(n\cdot w_{\max}))$ in many applications, since the probability performing a beneficial step improves to $1/(2em)$.

	\section{Multi-Objective Problem}
	\label{sec:multi-objective-scenario}
	In this section we consider the multi-objective version of the minimum spanning tree problem. Firstly, we introduce the ranking of the edges in multi-objective space and experimentally show a considerably good approximation for the appearance of edges in an moMST according to their ranks. Using the approximation, we analyze the performance of GSEMO-UM and GSEMO-BM dealing with two different types of graphs.
	
	\subsection{Experimental Approximation}
	\label{sec:mo-experiments}
	\begin{figure*}
		\centering
		\scalebox{0.91}{
			\input{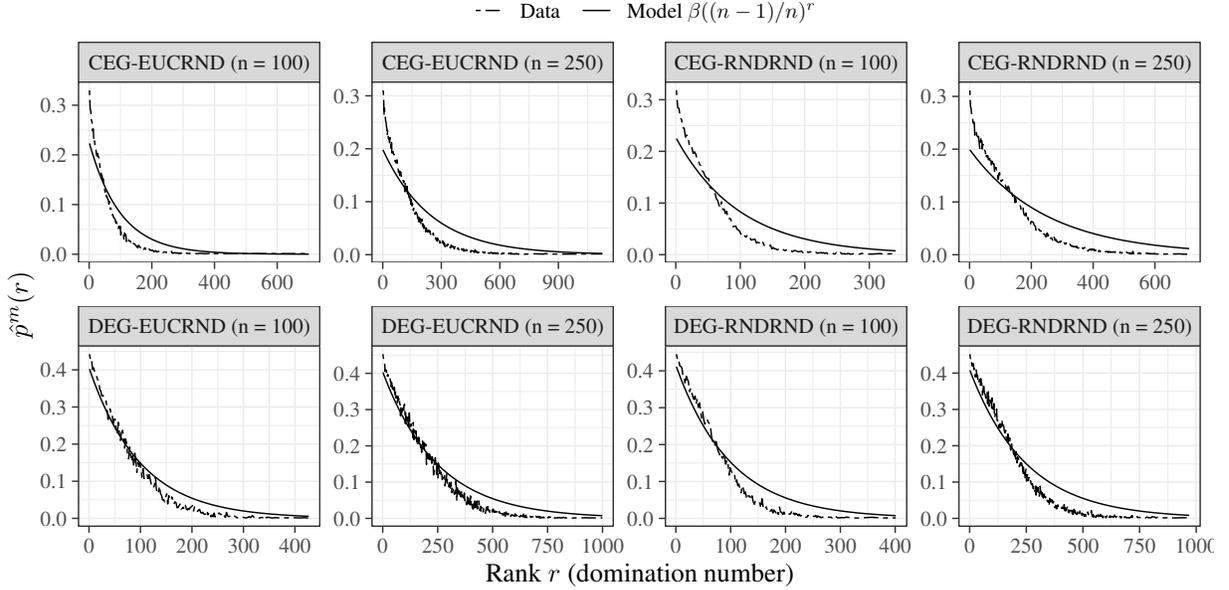}
		} 
		\caption{Empirical probabilities $p^{\mathrm{m}}(r)$ of edges to be part of at least one non-dominated spanning tree as a function of its rank~$r$ measured by the domination number (lower is better). The empirical data is accompanied by regression models of the form $\beta\cdot\left((n-1)/n\right)^r$.}
		\label{fig:gecco2019_mcmst_models}
	\end{figure*}
	
	The work by Raidl. et al.~\cite{DBLP:journals/tec/RaidlKJ06} considered the single-objective scenario and lays the groundwork for our empirical study. As a reminder: the authors showed that low rank edges have a much higher probability to be part of MSTs. 
	In the setting of multiple conflicting objectives similar assumptions are reasonable, i.~e., that non-dominated spanning trees are more likely composed of \enquote{low-rank} edges for an appropriate definition of \enquote{rank}. In a recent study Bossek et al.~\cite{DBLP:conf/gecco/BossekG019} considered different ranking definitions in the bi-objective case. More precisely, they considered (1) the non-domination level and (2) the domination number of an edge to define the rank and established a total order on the edges with low ranks being favored. Similar to Raidl's work, they conducted an empirical study and estimated the probability $p^{\mathrm{m}}(r)$ of edges to be part of at least one spanning tree as a function of its rank $r$ for different graph classes (more details in the following). They, next, empirically evaluated the convergence speed of biased edge selection strategies in comparison to the baseline of random uniform selection. They obtained significant improvements, particularly in the case where the domination number $d(e) = |\{e' \in E\, | \,w(e') \succeq w(e)\}|$ was adopted for the definition of rank and the probability of choosing an edge with edge $r$ for insertion was set to
	\begin{align*}
	q_{\mathrm{b}}^{\mathrm{m}}(r) = \frac{p^{\mathrm{m}}(r)}{\sum_{r} p^{\mathrm{m}}(r)},
	\end{align*}
	i.~e., proportional to its probability of appearance in non-dominated solutions. We catch up on their work and illustrate empirically, that $\beta\cdot\left((n-1)/n\right)^r$ -- similar to Raidl's results -- is indeed a good approximation for the probability $p^{\mathrm{m}}(r)$.
	In line with Bossek et al., our empirical study is based on different graph types reflecting different levels of density and edge weight distribution. Complete graphs (CEG for \underline{C}omplete \underline{E}dge \underline{G}eneration) with $n$ nodes placed uniformly at random in $[0, 100]^2$ are studied alongside graphs where the interconnection of nodes is based on a Delauney triangulation of the point cloud in the Euclidean plane (\underline{D}elauney \underline{E}dge \underline{G}eneration). Note that in the latter case $m = \Theta(n)$. Edge weights $w_i(e), i = 1, 2$ either both are realizations of uniform random numbers stemming from a $\mathcal{U}[5, 200]$-distribution (RNDRND; in consistence with \cite{ZG1999GeneticAlgorithm, KC2001AComparisonOfEncodings}) or the first weight corresponds to the Euclidean distance between the nodes in the plane and the second weight is sampled from a $\mathcal{U}[5, 200]$ distribution (EUCRND). For each graph type, i.~e., CEG-RNDRND, CEG-EUCRND, DEG-RNDRND and DEG-EUCRND we consider $n \in \{25, 50, 100, 250\}$.
	
	The estimation of $p^{\mathrm{m}}(r)$ follows~\cite{DBLP:conf/gecco/BossekG019}. Here, we describe the procedure in a nutshell and refer the interested reader to the original work. First consider a single random graph $G = (V, E)$ of a given graph type and problem size $n$. For each edge $e$, we calculate the number of non-dominated spanning trees that $e$ is part of \footnote{The set of non-dominated spanning trees is approximated by a simple weighted-sum approach minimizing $\lambda w_1(T) + (1-\lambda)w_2(T)$ for equidistantly sampled $\lambda = k/1000, k = 0, \ldots, 1000$.}, termed the share $s(e)$, and estimate the probability of $r$-ranked edges by the average of all shares of the corresponding rank. We repeat this process for $1000$ random graphs of the corresponding graph type and $n \in \{25, 50, 100, 250\}$ and use the mean probability over all $1000$ instances as the final estimate for $p^{\mathrm{m}}(r)$.
	

	Figure~\ref{fig:gecco2019_mcmst_models} shows the estimations of $p^{\mathrm{m}}(r)$, the probability of rank\nobreakdash-$r$ edges to be part of at least one non-dominated spanning tree, separated by graph class and number of nodes. We present results for $n \in \{100, 250\}$ due to space limitations\footnote{Omitted results for $n \in \{25, 50\}$ show the same patterns.}. The estimations are accompanied by fitted regression models of the form $\beta \cdot \left((n-1)/n\right)^r$. We observe that the model mostly adheres quite well to the data. These observations are supported by the results of a regression analysis. Here, the $R^2$ values -- a measure for the fraction of variance in the data explained by the model -- takes values close to 1 with a minimum of $0.8893$ for CEG-EUCRND graphs with $n = 250$ nodes. Additionally, the root mean squared error (RMSE) values, i.~e., the mean deviation of the model predictions to the data, are very low consistently. All in all the experiments support our parametric model assumption for different dense and sparse graphs. As a consequence, we use this empirical estimate for our upcoming theoretical runtime analysis.
	
	\subsection{Theoretical Analysis}
	\begin{figure}[t]
		\centering
		\begin{tikzpicture}[scale=1.8]
        \begin{scope}[every node/.style={circle, draw=transparent, minimum size=0.1em, fill=black}]
            \node (v1) at (0, 0) {};
            \node (v2) at (0.5, 0.5) {};
            \node (v3) at (1, 0) {};
            \node (v4) at (1.5, 0.5) {};
            \node (v5) at (2, 0) {};
            \node (v6) at (2.5, 0) {};
            \node (v7) at (3, 0.5) {};
            \node (v8) at (3.47, 0) {};
            \node (v9) at (3.72,0.15){};
            \node (v10) at (4.0,0.3){};
            \node (v11) at (4.47,0.0){};
            \node (v12) at (4.27,0.35){};
            \node (v13) at (3.73,-0.29){};
            \node (v14) at (4.07,-0.42){};
            \node (v15) at (4.45,-.29){};
        \end{scope}
        \foreach \s/\t/\cost in {
            v1/v2/{1,2} , v1/v3/{2,1} , v3/v2/{1,2} ,
            v3/v4/{1,2} , v3/v5/{2,1} , v4/v5/{1,2} ,
            v6/v7/{1,2} , v6/v8/{2,1} , v7/v8/{1,2} ,
            v9/v10/{$l,l$} , v11/v12/{$l,l$},
            v13/v14/{$l,l$}, v14/v15/{$l,l$}}
            \draw (\s) edge[thick, sloped, pos=0.5] node[above] {\cost} (\t);
        \draw (v5) edge[dashed, thick] (v6) ;
        \node[circle, minimum size=7em, draw] (kq) at (4.16, 0) {$G^C$} ;
        
        \foreach \x/\i in {0.5/1, 1.5/2, 3/p}
            \node at (\x, 0.7) {$T_{\i}$};
    \end{tikzpicture}
		\vspace{-0.3cm}
		\caption{Triangular-tailed graph $G$ with a chain of $p = n/4$ triangles and a giant component $G^C = K_{n/2}$.~\cite{DBLP:journals/tcs/NeumannW07}}
		\label{fig:triangle-graph-mo}
	\end{figure}
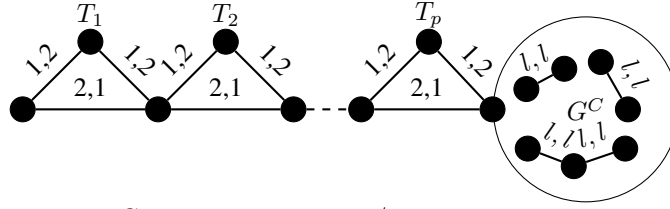
	Motivated by the experimental results, we use $p^{\mathrm{m}}(r)=\beta \cdot \left((n-1)/n\right)^r$ as the approximation for the probability of an edge with domination number $r$ appears in the moMST. As $\beta$ consistently takes values in $(0,1)$ throughout the experiments, we drop this constant factor in subsequent investigations. Note that we break rank ties randomly. Hence, we have $m=|E|$ different edges with $m$ different probabilities. Using Bossek \etal \cite{DBLP:conf/gecco/BossekG019} approach, for each edge $e$ with domination number $r$ we set 
	$$
	q(e)=q_{\mathrm{b}}^{\mathrm{m}}(r)
	$$
	for the probability of choosing $e$ in the mutation step in Algorithm~\ref{alg:GSEMO}. Using the same arguments as in Lemma~\ref{lem:O(n)-probability}, we have the following lemma for biased mutation in the multi-objective setting.
	\begin{lemma}\label{lem:MO-O(n)-probability}
		Using the biased mutation with probability $q(e)=q_{\mathrm{b}}^{\mathrm{m}}(r)$, the probability of selecting edge $e$ with domination count $r=O(n)$ is $\Theta(1/n)$.
	\end{lemma}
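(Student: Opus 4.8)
The plan is to reuse the proof of Lemma~\ref{lem:O(n)-probability} almost verbatim. The only structural difference is that the multi-objective selection weight $q_{\mathrm{b}}^{\mathrm{m}}(r)$ normalises $p^{\mathrm{m}}(r) = ((n-1)/n)^r$ directly rather than its square root, so the geometric ratio appearing in the denominator is $a := (n-1)/n$ instead of $\sqrt{a}$, and the empirical constant $\beta \in (0,1)$ cancels and may be dropped throughout. First I would estimate the normalising constant. Since ties in the domination number are broken at random, the $m$ edges receive distinct ranks $1, \ldots, m$ exactly as in the single-objective ranking, so $Z := \sum_{i=1}^m p^{\mathrm{m}}(i) = \sum_{i=1}^m a^i = a\cdot\frac{1-a^m}{1-a} = (n-1)(1-a^m)$, using $1 - a = 1/n$.

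Next I would argue $1 - a^m = \Theta(1)$: the tail $a^m = ((n-1)/n)^m$ is $o(1)$ on the dense graph classes, where $m = \Theta(n^2)$, and is a constant strictly below $1$ on the sparse classes, where $m = \Theta(n)$, so in every case $Z = \Theta(n)$. For the numerator, given an edge $e$ with $r = d(e) \le cn$ for a constant $c$, I would bound $1 \ge p^{\mathrm{m}}(r) = (1-1/n)^r \ge (1-1/n)^{cn} = \bigl((1-1/n)^n\bigr)^c \ge e^{-c'}$ for a constant $c' > c$, using that $(1-1/n)^n$ is bounded below by a positive constant. Hence $p^{\mathrm{m}}(r) = \Theta(1)$, and combining with the denominator estimate gives $q_{\mathrm{b}}^{\mathrm{m}}(r) = p^{\mathrm{m}}(r)/Z = \Theta(1)/\Theta(n) = \Theta(1/n)$, as claimed.

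I do not expect a genuine obstacle: the argument is a routine transcription of Lemma~\ref{lem:O(n)-probability} with $\sqrt{a}$ replaced by $a$, so that the estimate $1 - \sqrt{a} = \Theta(1/n)$ used there becomes the even simpler identity $1 - a = 1/n$. The only point requiring a moment's care is that the tail term $a^m$ behaves differently for the complete (CEG) and Delauney (DEG, $m = \Theta(n)$) graph classes underlying the empirical model; however, it only affects the leading constant of $Z$ and never its $\Theta(n)$ order, so the bound $q_{\mathrm{b}}^{\mathrm{m}}(r) = \Theta(1/n)$ holds uniformly over the graph types we consider.
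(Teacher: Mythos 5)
Your proposal is correct and follows exactly the route the paper intends: the paper gives no separate proof for this lemma, stating only that it follows ``using the same arguments as in Lemma~\ref{lem:O(n)-probability}'', and your write-up is precisely that transcription with $\sqrt{a}$ replaced by $a=(n-1)/n$, so the normalising sum becomes $(n-1)(1-a^m)=\Theta(n)$ and the numerator is $\Theta(1)$ for $r=O(n)$. Your extra observation that $1-a^m=\Theta(1)$ holds for both dense and sparse graph classes (any connected graph has $m\geq n-1$, so $a^m\leq e^{-(n-1)/n}$ is bounded away from $1$) is a small but welcome refinement of the paper's $1-o(1)$ step, which implicitly assumes $m=\omega(n)$.
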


	Again, we consider the triangular-tailed graph in two versions $G_1^{\mathrm{m}}$ and $G_2^{\mathrm{m}}$. Both graphs contain $\eta$ triangles in the tail. In each triangle, the two upper edges have weights $(1, 2)$ while the bottom edge has weight $(2, 1)$. The difference lies in the composition of the clique part $G^C$. Here, in $G_1^{\mathrm{m}}$ all edges have the same weight $(k, k)$, $k > 2$ while in $G_2^{\mathrm{m}}$ there exists a subset $G^S= \{e_1, \ldots, e_{l}\}\subseteq G^C $ of size $l\leq(n/2 - 1)$ with $w(e) = (u, u)$, $u>2$, for each edge $e \in G^S$ and $w(e) = (k, k)$, $k > u+n+1$, for all remaining clique edges. We also assume that the edges in $G^S$ do not create any cycle. Let us at this point retain the following: every non-dominated spanning tree of $G_1^{\mathrm{m}}$ contains an arbitrary spanning tree on $G^C$ as a sub-graph. In contrast, in $G_2^{\mathrm{m}}$ every non-dominated spanning tree must necessarily contain $G^S$ as a sub-graph.
	
	Let us briefly state our goals here. We denote by $\mathcal{T}^{*}$ the set of non-dominated spanning trees for a given graph and by $\mathcal{F} = w(\mathcal{T}^{*})$ its image, i.~e., the set of all Pareto-optimal objective vectors.
	We seek to locate for each $f \in \mathcal{F}$ a spanning tree $T^{*} \in \mathcal{T}^{*}$.
	
	\begin{lemma}
		\label{lem:linear_size_pareto_front}
		For both $G_1^{\mathrm{m}}$ and $G_2^{\mathrm{m}}$ we have $|\mathcal{F}| = \Theta(n)$.
	\end{lemma}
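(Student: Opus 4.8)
The plan is to split the objective value of a spanning tree into a part determined by the clique $G^C$ (together with the fixed edges attaching the tail to it), which will turn out to be a \emph{constant} vector, and a ``tail part'' which ranges over exactly $\eta+1$ pairwise incomparable vectors. Since $\eta = n/4 = \Theta(n)$, this yields $|\mathcal{F}| = \Theta(n)$ in both cases. Throughout I use the structural facts recorded in the excerpt, namely that every non-dominated spanning tree of $G_1^{\mathrm{m}}$ induces a spanning tree on $G^C$ and every non-dominated spanning tree of $G_2^{\mathrm{m}}$ contains $G^S$; I also use that $G^C$ is attached to the tail at a single vertex, so the restriction of \emph{any} spanning tree to $G^C$ is a spanning tree of $G^C$.

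\emph{The tail.} Consider one triangle $\{a,b,c\}$ of the tail, with chain vertices $a,c$ and apex $b$; recall the two upper edges $ab,bc$ have weight $(1,2)$ and the bottom edge $ac$ has weight $(2,1)$. I would first show that every spanning tree uses exactly two of these three edges, contributing weight $(3,3)$ or $(2,4)$: the apex $b$ has graph-degree $2$, so at least one of $ab,bc$ is used; and since the tail is a chain, connectivity forces $a$ and $c$ to be joined \emph{inside} this triangle, i.e.\ either by $ac$ or by the path $a$--$b$--$c$. This leaves precisely the options ``$ac$ plus one upper edge'' (weight $(2,1)+(1,2)=(3,3)$, two ways) and ``both upper edges'' (weight $(1,2)+(1,2)=(2,4)$); the same dichotomy holds for the first triangle, whose left endpoint is also a degree-$2$ vertex. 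Summing over the $\eta$ triangles, if $j$ of them use their bottom edge, the tail contributes $j\,(3,3)+(\eta-j)\,(2,4)=(2\eta+j,\,4\eta-j)$ with $j\in\{0,\dots,\eta\}$; these $\eta+1$ vectors form an antichain, since the first coordinate strictly increases and the second strictly decreases with $j$.

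\emph{The clique part and the upper bound.} For $G_1^{\mathrm{m}}$ the restriction of a spanning tree to $G^C$ is a spanning tree of $K_{n/2}$, hence uses $n/2-1$ edges of weight $(k,k)$ and contributes the fixed vector $((n/2-1)k,(n/2-1)k)$. For $G_2^{\mathrm{m}}$ a non-dominated spanning tree contains the forest $G^S$ ($l$ edges of weight $(u,u)$) and completes it to a spanning tree of $G^C$ using exactly $(n/2-1)-l$ edges of weight $(k,k)$, contributing the fixed vector $(lu+((n/2-1)-l)k,\ lu+((n/2-1)-l)k)$. Adding the constant coming from the fixed tail-to-clique attachment, every (non-dominated) spanning tree has objective value $c+(2\eta+j,4\eta-j)$ for a constant $c$ and some $j\in\{0,\dots,\eta\}$, so $|\mathcal{F}|\le\eta+1=O(n)$.

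\emph{The lower bound and the main difficulty.} For each $j$ I would construct a spanning tree $T_j$ realising $c+(2\eta+j,4\eta-j)$ by using a bottom edge in $j$ triangles and both upper edges in the remaining $\eta-j$, and taking in $G^C$ any spanning tree (for $G_1^{\mathrm{m}}$) resp.\ any spanning tree containing $G^S$ (for $G_2^{\mathrm{m}}$). By the antichain property no $T_{j'}$ with $j'\neq j$ dominates $T_j$. For $G_1^{\mathrm{m}}$ these are the only spanning trees, so each $T_j$ is non-dominated and $|\mathcal{F}|\ge\eta+1$. For $G_2^{\mathrm{m}}$ the only remaining competitors are spanning trees omitting at least one edge of $G^S$: such a tree has clique weight larger by at least $(k-u,k-u)$, whereas the tail can reduce the first coordinate by at most $\eta=n/4$, and $k-u>n+1>n/4$ forces its first coordinate to be strictly larger, so it cannot dominate $T_j$ either. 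Hence all $\eta+1$ vectors lie on the Pareto front and $|\mathcal{F}|=\eta+1=\Theta(n)$. The main obstacle is precisely this last point for $G_2^{\mathrm{m}}$ -- showing that the $\eta+1$ incomparable vectors are genuinely non-dominated, i.e.\ that no tree ``cheating'' on $G^S$ can beat them -- which is where the weight gap $k>u+n+1$ is used; the per-triangle case analysis is routine but must be carried out carefully.
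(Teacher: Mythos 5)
Your proof is correct and follows essentially the same route as the paper: the clique contribution is constant across (non-dominated) spanning trees, while the tail contributes one of $\eta+1$ pairwise incomparable vectors indexed by the number of bottom edges used, giving $|\mathcal{F}|=\eta+1=\Theta(n)$. You are somewhat more thorough than the paper in two places it leaves implicit -- the per-triangle argument that exactly two edges are used, and the explicit check for $G_2^{\mathrm{m}}$ that trees omitting an edge of $G^S$ cannot dominate (via the gap $k>u+n+1$) -- but these are refinements of the same argument, not a different one.
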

	
	\begin{proof}
		Let us first consider the clique part. In $G_1^{\mathrm{m}}$ each spanning tree of $G^C$ has equal weight, we may fix an arbitrary one. In $G_2^{\mathrm{m}}$ the non-dominated spanning tree of $G^C$ must include all the edges of $G^S$. Thus, for each graph type $G_1^{\mathrm{m}}$ and $G_2^{\mathrm{m}}$, the contribution of the edges of $G^C$ in objective values are the same.
		Since the triangular tail is identical for both $G_1^{\mathrm{m}}$ and $G_2^{\mathrm{m}}$, the following observations hold for both versions. Every non-dominated spanning tree contain exactly two edges of each triangle, in particular at least one edge with weight $(1, 2)$, \ie there are at least $\eta$ edges of weight $(1,2)$ in each Pareto solution. Hence, for each non-dominated spanning tree the weight of the triangular part is
		\begin{align*}
		\eta 
		\cdot
		\begin{bmatrix}
		1 \\ 2 \\
		\end{bmatrix}
		+ r \cdot
		\begin{bmatrix}
		1 \\ 2 \\
		\end{bmatrix}
		+ (\eta - r) \cdot
		\begin{bmatrix}
		2 \\ 1 \\
		\end{bmatrix}
		= \begin{bmatrix}
		3\eta - r \\
		3\eta + r \\
		\end{bmatrix}\text{,}
		\end{align*}
		where $0 \leq r \leq \eta$ is the number of triangles that have two upper edges in the spanning tree. Together with our observations in the clique part, this implies $r \in \Theta(n)$ and, as a direct consequence, $|\mathcal{F}| = \Theta(n)$.
	\end{proof}
	
	Let $f_0, f_1, \ldots, f_{p} \in \mathcal{F}$ be the objective vectors in ascending order of the first weight (and thus in descending order of the second weight). In the following, we show that we can easily move between Pareto-optimal spanning trees with distinct weights. We use the notation $d(T, T') := |T \setminus T'|$ and speak about distance of spanning trees in terms of the necessary edge exchange operations needed to transform $T$ to $T'$.
	
	\begin{lemma}\label{lem:connectedness_of_solutions}
		For each non-dominated spanning tree $T$ in $G_1^{\mathrm{m}}$ and $G_2^{\mathrm{m}}$ with $w(T) = f_i, 0 \leq i \leq \eta$, there is a non-dominated spanning tree $T'$ with $d(T, T') = 1$ such that
		\begin{itemize}
			\item $w(T') = f_{i+1}$ for $0 \leq i \leq \eta-1$ or
			\item $w(T') = f_{i-1}$ for $1 \leq i \leq \eta$.
		\end{itemize}
		
	\end{lemma}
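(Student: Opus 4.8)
The plan is to prove the lemma constructively: given a Pareto-optimal spanning tree $T$ with $w(T) = f_i$, I will exhibit an explicit single edge exchange that yields a neighbour with the desired objective vector. First I would record the structural facts. The edge joining the triangular tail to $G^C$ is a cut edge, so it lies in every spanning tree; consequently the restriction of $T$ to the tail is a spanning tree of the tail, and since the $\eta$ triangles span an $\eta$-dimensional cycle space, $T$ omits exactly one edge of each triangle. Hence, triangle by triangle, $T$ is in one of two states: \emph{two-upper}, where both weight-$(1,2)$ edges are present and the weight-$(2,1)$ bottom edge is absent, or \emph{mixed}, where the bottom edge and exactly one upper edge are present. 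Writing $r$ for the number of two-upper triangles, Lemma~\ref{lem:linear_size_pareto_front} already gives that $w(T)$ equals the (minimal) clique contribution plus $(3\eta - r,\, 3\eta + r)$, and that $f_0, \dots, f_\eta$ are exactly these vectors as $r$ decreases from $\eta$ to $0$; in particular $w(T) = f_i$ forces $r = \eta - i$.

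Next I would describe the two moves. If $i \le \eta - 1$ then $r \ge 1$, so some triangle is two-upper; I add its bottom edge $b$ to $T$ and delete one of its two upper edges. Adding $b$ creates exactly one cycle, which is that very triangle (the unique $T$-path between the endpoints of $b$ runs through their common apex along the two upper edges), so deleting an upper edge on this cycle restores a spanning tree $T'$ with $d(T, T') = 1$; the weight changes by $(2,1) - (1,2) = (1,-1)$, so the count of two-upper triangles becomes $r - 1 = \eta - (i+1)$. Symmetrically, if $i \ge 1$ then $r \le \eta - 1$, so some triangle is mixed; I add its missing upper edge (again the only new cycle is that triangle) and delete the bottom edge, obtaining $T'$ with $d(T, T') = 1$ and weight change $(1,2) - (2,1) = (-1,1)$, so the count becomes $r + 1 = \eta - (i-1)$.

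Finally I would verify that $T'$ is Pareto-optimal with the claimed value. Both moves alter only tail edges, so $T'$ has the same clique sub-graph as $T$. In $G_1^{\mathrm{m}}$ every spanning tree of $G^C$ has the same weight, and in $G_2^{\mathrm{m}}$ the Pareto-optimality of $T$ forces its clique part to contain all of $G^S$ and hence to realise the minimal clique contribution; either way $T'$ inherits the minimal clique contribution. Together with the computed weight change, $w(T')$ equals the minimal clique contribution plus $(3\eta - r',\, 3\eta + r')$ with $r' \in \{r-1, r+1\} \subseteq \{0, \dots, \eta\}$, which by Lemma~\ref{lem:linear_size_pareto_front} is a point of $\mathcal{F}$, namely $f_{i+1}$ in the first case and $f_{i-1}$ in the second.

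The only step needing real care is the structural preamble — establishing that a spanning tree omits precisely one edge per triangle, that the clique and tail contributions decouple, and (for $G_2^{\mathrm{m}}$) that every non-dominated spanning tree must contain $G^S$. Once these are in place, each of the two cases is a one-line edge swap followed by a two-coordinate weight computation, so I do not anticipate any genuine obstacle beyond stating these facts precisely enough that the single-exchange neighbour is unambiguous.
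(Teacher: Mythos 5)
Your proof is correct and takes essentially the same route as the paper's: a single edge exchange inside one triangle (insert the bottom $(2,1)$ edge of a two-upper triangle and drop an upper edge, or the reverse), with the weight change $\pm(1,-1)$ moving the tail contribution one step along the front while the clique contribution stays minimal. You are in fact more careful than the paper's own write-up: you verify that the created cycle is exactly the triangle, treat both directions explicitly, and use the count of bottom edges $b(T)=i$ consistently with the proof of Theorem~\ref{thm:GSEMO_G_1}, whereas the paper's proof of this lemma asserts that $T$ contains $\eta-i$ bottom edges, which is an indexing slip that your version avoids.
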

	
	\begin{proof}
		
		We only prove the first case. The proof for the other case is similar. Consider a non dominated spanning tree $T$ with $w(T) = f_i, 0 \leq i \leq \eta-1$. $T$ contains exactly $(\eta-i)$ edges of weights $(2, 1)$ in the triangular-tail part. Now we obtain $T'$ by including one of the $\eta-(\eta-i) = i$ remaining edges of weight $(2, 1)$ and dropping a $(1, 2)$ weighted edge on the resulting cycle. It follows that $w(T') = f_{i+1}$ and clearly $d(T, T') = 1$.
	\end{proof}
	
	Lemma~\ref{lem:connectedness_of_solutions} states that once we found a single non-dominated spanning tree it is easy to obtain the others.

	\begin{theorem}
		\label{thm:GSEMO_G_1}
		On $G_1^{\mathrm{m}}$, given an initial spanning tree $T$, GSEMO-UM needs expected time $O(n^3 \log n)$ to cover the Pareto front.
	\end{theorem}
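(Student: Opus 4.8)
The plan is to verify that GSEMO-UM never discards a Pareto-optimal objective vector once it has been reached, and then to control how quickly the population fills in the whole front by a multiplicative-drift argument whose drift bound exploits the fact that between two neighbouring Pareto points there are \emph{many} single edge exchanges, not just one.

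First I would pin down the structure of $G_1^{\mathrm{m}}$. Since every clique edge has weight $(k,k)$, every spanning tree of $G_1^{\mathrm{m}}$ uses the same number $n/2-1$ of clique edges, so the clique part together with the unique connecting edge always contributes a fixed objective value; the only remaining freedom is the number $r\in\{0,\dots,\eta\}$ of triangles that are included with both of their $(1,2)$-weighted edges, giving tail weight $(3\eta-r,3\eta+r)$. Hence \emph{every} spanning tree of $G_1^{\mathrm{m}}$ is Pareto-optimal, the Pareto front $\mathcal{F}$ consists of exactly the $\eta+1=\Theta(n)$ vectors indexed by $r$ (consistent with Lemma~\ref{lem:linear_size_pareto_front}), and in particular the given initial tree $T$ already lies in $\mathcal{T}^{*}$, so no warm-up phase is needed. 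Because any two of these vectors are mutually incomparable, a solution is removed from $P$ only when a solution of the \emph{same} weight enters; consequently the set $S\subseteq\{0,\dots,\eta\}$ of values of $r$ currently represented in $P$ is non-decreasing, $P$ holds exactly one tree per element of $S$, and $|P|=|S|\le\eta+1$ at all times. (Mutations that insert $k\ge 2$ edges can only enlarge $S$, so it is safe to ignore them in the lower bound below.)

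For the progress argument I would use the potential $g=(\eta+1)-|S|$, the number of missing Pareto vectors, and bound $E[g_t-g_{t+1}\mid S_t]$ from below. Refining Lemma~\ref{lem:connectedness_of_solutions}: for an uncovered value $u\notin S$ with $u-1\in S$, starting from the unique tree in $P$ of parameter $u-1$ (chosen with probability $1/|P|$), a $1$-step (no extra Poisson event, probability $1/e$) that inserts one of the $\eta-(u-1)$ upper edges missing in some ``single-upper'' triangle and then deletes the bottom edge of the created triangle (probability $1/3$) produces a tree of parameter $u$, which GSEMO accepts and which therefore enlarges $S$; symmetrically, if $u+1\in S$ there are $u+1$ productive insertions completed by an appropriate deletion with probability $2/3$. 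Summing these disjoint events over all $u\notin S$ and decomposing the uncovered set into maximal runs, a short case check on the run endpoints (each endpoint is adjacent to $S$, except when the run abuts $0$ or $\eta$, where one side already matches the run length) shows that the total productive multiplicity is $\Omega(g)$; since every productive move is realised with probability at least $\Omega(1/(em))$ once the right solution is selected, we obtain
\begin{align*}
E[g_t-g_{t+1}\mid S_t]\;\ge\;\frac{1}{|P|}\cdot\Omega\!\left(\frac{g_t}{em}\right)\;=\;\Omega\!\left(\frac{g_t}{e\,m\,\eta}\right).
\end{align*}
Applying the multiplicative drift theorem~\cite{DBLP:journals/algorithmica/DoerrG13} with $\delta=\Omega(1/(m\eta))$ and initial potential at most $\eta$ then yields expected first hitting time $O(m\eta\log\eta)$, and with $m=\Theta(n^2)$ for the complete clique and $\eta=n/4=\Theta(n)$ this is $O(n^3\log n)$.

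The step I expect to be the main obstacle is exactly the drift estimate's robustness: because GSEMO may insert several edges in one step, $S$ need not be an interval, so the naive ``extend the covered interval at one of its two ends'' accounting is not literally available. The fix is the run-decomposition inequality above — for every maximal block of missing values, both of its boundary values (or the single relevant one, near $0$ or $\eta$) are neighbours of $S$ and can be hit by a targeted $1$-step whose number of realising edge choices is large (it scales like $\eta-u$ or $u$), which is what keeps the summed multiplicity proportional to $g$ rather than just constant. A minor point to keep careful is uniformity of the drift bound over all states with $g>0$, which holds since $P$ is always non-empty with $|P|\le\eta+1$.
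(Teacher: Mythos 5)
Your proof is correct, and it takes a genuinely different route from the paper's. The paper argues sequentially along the front: from a tree with objective vector $f_i$ it bounds the probability of producing a tree with vector $f_{i+1}$ (or $f_{i-1}$) by a single edge exchange, and then sums the resulting waiting times $3em(i+1)/(2(\eta-i))$ over $i$, \ie a fitness-level-style sweep of the front from one end, concluding with ``one of the two sweeps always makes progress.'' You instead put a multiplicative-drift argument on the number $g$ of uncovered Pareto vectors, and your run-decomposition of the uncovered set (each maximal gap $[a,b]$ contributes productive multiplicity at least $b-a+1$, via the boundary value $b$ when $b<\eta$ and via $a$ when $b=\eta$, with $S\neq\emptyset$ ruling out the empty case) is exactly what makes the drift $\Omega(g_t/(m\eta))$ uniform over all reachable states. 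What your version buys is robustness: the paper's accounting implicitly treats the front as being filled in order from one end and uses $|P|\le i+1$ when targeting $f_{i+1}$, which is not literally guaranteed once multi-edge mutations create gaps in the covered set; your argument needs only $|P|\le\eta+1$ and works for arbitrary covered sets. What the paper's version buys is brevity and the avoidance of the drift machinery. Your preliminary structural observations (every spanning tree of $G_1^{\mathrm{m}}$ is Pareto-optimal, so the initial tree is already on the front, $S$ is non-decreasing, and $k\ge 2$ mutations can only help) are all correct and consistent with Lemmas~\ref{lem:linear_size_pareto_front} and~\ref{lem:connectedness_of_solutions}, and both routes land on the same $O(m\eta\log\eta)=O(n^3\log n)$ bound.
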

	
	\begin{proof}
		Let $T$ be a spanning tree with $w(T) = f_i$ and $b(T)$ denote the number of triangle edges with weight $(2, 1)$ for $T$. We shall refer those edges \emph{bottom edges} in the following. Since, $w(T) = f_i$ clearly $b(T) = i$. By Lemma~\ref{lem:connectedness_of_solutions} we can move to a tree with weight vector $f_{i+1}$ or $f_{i-1}$ by adding or removing a bottom edge. In GSEMO (see Algorithm~\ref{alg:GSEMO}) achieving $f(i+1)$ happens with probability at least
		\begin{align*}
		\left(\frac{1}{i+1}\right) \cdot \left(e^{-1} \cdot \frac{(\eta-i)}{m}\right) \cdot \left(\frac{2}{3}\right) = \frac{2(\eta-i)}{3em(i+1)}.
		\end{align*}
		Here, the first term is the probability to select the individual $T$ with $w(T) = f_i$ such that $T'$ with $w(T')=f_{i+1}$ is not included in the population yet, the second term is the probability for the $0$ event of a $\text{Pois}(\lambda = 1)$ distribution, \ie, to add exactly one edge to the sampled solution, and the third term is the probability to remove one of the non-bottom edges from the resulting cycle. Adopting waiting time arguments, the expected number of iterations until $f_{i+1}$ is achieved is bounded from above by $3em(i+1)/2(p-i)$. Hence, the total time until the population of GSEMO contains each one solution for each Pareto-optimal objective vector $f_i, i = 0, \ldots, n$ -- only by adding bottom edges and starting with a solution with trade-off $f_0$ in the worst case -- is bounded by the sum 
		\begin{align*}
		\sum_{i=0}^{\eta-1} \frac{3em(i+1)}{2(\eta-i)}
		& = \frac{3em}{2} \cdot \sum_{i=0}^{\eta-1} \frac{(i+1)}{(\eta-i)}
		\leq \frac{3em}{2} \cdot \sum_{i=0}^{\eta-1} \frac{\eta}{(\eta-i)} \\
		& = \frac{3em\eta}{2} \cdot H_\eta = O(n^3 \log n).
		\end{align*}
		
		On the other hand, to include $f_{i-1}$, the algorithm must choose a non-bottom edge from the triangles that include one, with probability $i/(em)$, and remove the bottom edge with probability $1/3$. Thus, the probability of this event is $i/3em(i+1)$, \ie, the expected number of iterations for such event happen is $3em(i+1)/i$. Therefore -- only by decreasing the number of bottom edges and starting from $f_{\eta}$ in the worst case -- the expected time for GSEMO to achieve all the objective vectors in the Pareto front is upper bounded by the sum
		
		\begin{align*}
		\sum_{i=1}^{\eta} \frac{3em(i+1)}{i}
		& = 3em \cdot \sum_{i=1}^{\eta} \frac{(i+1)}{i}
		\leq 3em \cdot \sum_{i=1}^{\eta} \frac{\eta}{i} \\
		& = 3em\eta \cdot H_\eta = O(n^3 \log n).
		\end{align*}
		
		All together, since one of the cases is always available, the total upper bound is $O(n^3\log{n})$. 
	\end{proof}
	
	Next we consider the performance of GSEMO-UM on $G_2^{\mathrm{m}}$. In an arbitrary moMST $T$ of $G_2^{\mathrm{m}}$, let $s=|G^S\cap T|$ denote the number of optimal edges $G^S$ in $T$.
	\begin{lemma} \label{lem:strict_dominance}
		For two solutions $T_1,T_2\in G_2^{\mathrm{m}}$, $T_1\succ T_2$ if and only if $s_1>s_2$.
	\end{lemma}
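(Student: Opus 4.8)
The plan is to pin down a closed form for the objective vector of every spanning tree of $G_2^{\mathrm{m}}$ and then read off strict dominance from it directly. First I would reuse the structural observation underlying the proof of Lemma~\ref{lem:linear_size_pareto_front}: since the clique $G^C$ meets the rest of $G_2^{\mathrm{m}}$ in a single cut vertex, no path between two clique vertices in a spanning tree $T$ can leave $G^C$, so $T$ contains exactly $n/2-1$ clique edges; of these, $s = |G^S \cap T|$ carry weight $(u,u)$ and the other $n/2-1-s$ carry weight $(k,k)$. Writing $C(s) := s\,u + (n/2-1-s)\,k$, the clique contributes $(C(s), C(s))$ to $w(T)$. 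Together with the tail computation already done in that proof --- if $r\in\{0,\dots,\eta\}$ triangles contribute their two $(1,2)$-edges and the other $\eta-r$ contribute one $(1,2)$- and one $(2,1)$-edge, the tail weight is $(3\eta - r,\,3\eta+r)$ --- this yields
\[
w(T) = \big(\,3\eta - r + C(s),\ \ 3\eta + r + C(s)\,\big).
\]
The only quantitative fact I will need is that $C$ is strictly decreasing with a gap exceeding $\eta$: $C(s) - C(s+1) = k - u > n+1 > n/4 = \eta$, using that $G_2^{\mathrm{m}}$ is built with $k > u + n + 1$.

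For the direction $s_1 > s_2 \Rightarrow T_1 \succ T_2$ I would compute, using $|r_1 - r_2| \le \eta$,
\[
w_1(T_2) - w_1(T_1) = \big(C(s_2) - C(s_1)\big) + (r_1 - r_2) \ \ge\ (k-u) - \eta \ >\ 0,
\]
and, symmetrically, $w_2(T_2) - w_2(T_1) = \big(C(s_2) - C(s_1)\big) + (r_2 - r_1) \ge (k-u)-\eta > 0$; hence $w(T_1) < w(T_2)$ in both coordinates, \ie $T_1 \succ T_2$. For the converse $T_1 \succ T_2 \Rightarrow s_1 > s_2$, I would start from $w_i(T_1) \le w_i(T_2)$ for $i\in\{1,2\}$, add the two inequalities so that the $\pm r$ contributions cancel, conclude $C(s_1) \le C(s_2)$ and thus $s_1 \ge s_2$ by monotonicity of $C$; the case $s_1 = s_2$ is then excluded because it makes $C(s_1)=C(s_2)$, which forces $r_1 \le r_2$ and $r_1 \ge r_2$ at once, hence $w(T_1)=w(T_2)$, contradicting strict dominance.

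I do not anticipate a real obstacle. The only steps that need a little care are justifying that the clique part of an \emph{arbitrary} spanning tree (not merely a Pareto-optimal one) is forced to be a spanning tree of $K_{n/2}$ containing exactly $s$ edges of $G^S$, and keeping track of which objective the $r$-term is added to and which it is subtracted from. The decisive input is purely the gap $k - u > \eta$ baked into the definition of $G_2^{\mathrm{m}}$: it guarantees that swapping a heavy clique edge for a $G^S$-edge dominates any possible rearrangement within the triangular tail, which is exactly what the lemma asserts.
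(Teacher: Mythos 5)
Your proof is correct and follows essentially the same route as the paper's: the decisive fact in both is that exchanging a heavy clique edge for a $G^S$-edge shifts both objectives by $k-u>n+1$, which exceeds any possible variation (at most $\eta$ per coordinate, or $n$ in the paper's coarser bound) coming from the triangular tail. Your write-up is in fact more complete than the paper's three-line argument, since you derive the explicit parametrization $w(T)=(3\eta-r+C(s),\,3\eta+r+C(s))$ and establish the converse direction of the equivalence explicitly (including the degenerate case $s_1=s_2$, where the two trees are either equal in weight or incomparable), whereas the paper only argues that larger $s$ implies strict dominance.
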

	\begin{proof}
		Considering the proof of lemma \ref{lem:linear_size_pareto_front}, the difference between the objective values of $T_1$ and $T_2$ that is caused because of the chosen tail edges is at most $n$. On the other hand, increasing $s$ improves both objective values by at least $n+1$. Thus any solution that has larger $s$ have strictly better objective value in both objectives.
	\end{proof}
	Lemma \ref{lem:strict_dominance} results in the fact that all the solutions in the population set of GSEMO have the same value of $s$.
	Note that GSEMO starts with a spanning tree and any offspring is also a spanning tree.
	
	\begin{theorem}\label{thm:GSEMO_G_2}
		On $G_2^{\mathrm{m}}$, given an initial spanning tree $T$, GSEMO-UM needs expected time $O(n^3\log{n})$ to cover the Pareto-front.
	\end{theorem}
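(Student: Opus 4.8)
The plan is to follow the blueprint of the proof of Theorem~\ref{thm:GSEMO_G_1}, preceded by an extra phase that forces the optimal clique edges $G^S$ into the population. The lever is the corollary of Lemma~\ref{lem:strict_dominance}: at every point in time all individuals of GSEMO-UM share one common value $s = |G^S \cap T|$, this value is non-decreasing over the run (an offspring with a smaller $s$ is strictly dominated by every population member and therefore rejected), and every Pareto-optimal objective vector is attained only by spanning trees with the maximal value $s = l$. So I would split the analysis into Phase~1 -- reach a spanning tree with $s = l$ -- and Phase~2 -- starting from such a tree, cover all of $\{f_0, \dots, f_\eta\}$.

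For Phase~1 I would fix the current common value $s < l$ and an arbitrary selected individual $T$, and set $F = G^S \cap T$, so $|F| = s$. Since $G^S$ is acyclic, inserting any edge $e \in G^S \setminus F$ into $T$ closes a cycle that must contain at least one edge outside $G^S$; moreover, because the triangular tail is attached to $G^C$ through a single vertex, this cycle is entirely contained in the clique, so that outside edge has weight $(k,k)$. Dropping such a heavy edge produces a spanning tree with $s+1$ optimal edges that strictly dominates $T$ and is hence accepted. I would then estimate the probability of one such move: a one-edge mutation has probability $e^{-1}$, there are $l - s$ admissible choices of $e$, and the created cycle has length at most $n/2$, so conditioned on $T$ being selected the success probability is at least $e^{-1}\cdot\frac{l-s}{m}\cdot\frac{2}{n}$. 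The point that keeps the final bound at $O(n^3\log n)$ rather than $O(n^4\log n)$ is that \emph{every} individual in the population admits such a move simultaneously, so the $1/|P|$ selection factor cancels and the unconditional probability of raising $s$ in one iteration is $\Omega\!\left(\frac{l-s}{mn}\right)$. Summing the resulting waiting times over $s = 0, \dots, l-1$ yields $O(mn\log n) = O(n^3\log n)$; at the step where $s = l$ is first reached the rest of the population is strictly dominated and deleted, leaving a single tree $T^{*}$ with $s = l$ and weight $f_j$.

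For Phase~2 the value $s = l$ stays frozen -- any offspring with $s < l$ is now rejected and $s \le l$ always holds -- so every accepted individual is a genuine moMST whose objective vector lies in $\{f_0, \dots, f_\eta\}$, and by Lemma~\ref{lem:connectedness_of_solutions} a neighbouring vector $f_{i\pm1}$ is reached from weight $f_i$ by a single edge exchange inside one tail triangle. This is precisely the situation of Theorem~\ref{thm:GSEMO_G_1}, so I would reuse its two waiting-time sums unchanged: spreading upward from $f_j$ to $f_\eta$ costs $\sum_i O(m(i+1)/(\eta-i)) = O(m\eta H_\eta) = O(n^3\log n)$ and spreading downward from $f_j$ to $f_0$ costs $\sum_i O(m(i+1)/i) = O(m\eta H_\eta) = O(n^3\log n)$, while covered vectors are never lost. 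Adding the two phases gives the claimed $O(n^3\log n)$.

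I expect the whole difficulty to sit in Phase~1: showing that the cycle closed by adding a $G^S$-edge is confined to the clique (so that a heavy edge is always available to be dropped), bounding the length of that cycle by $n/2$ to obtain the $2/n$ drop probability, and -- most importantly -- noticing that the usual $1/|P|$ penalty for selecting the right parent vanishes here because an $s$-improving mutation is available from \emph{all} parents at once. Phase~2 is essentially a transcription of the already-established Theorem~\ref{thm:GSEMO_G_1}, together with the standard observation that, thanks to Lemma~\ref{lem:strict_dominance}, GSEMO can never lose ground on $s$ or on an already-covered objective vector.
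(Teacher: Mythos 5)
Your proposal is correct and follows essentially the same route as the paper's proof: the same two-phase decomposition (first accumulate all of $G^S$ using Lemma~\ref{lem:strict_dominance}, with an $O(mn/(l-s))$ waiting time per increment of $s$, then cover the front by the argument of Theorem~\ref{thm:GSEMO_G_1}). Your added care about the insertion cycle being confined to the clique and about the $1/|P|$ selection factor being irrelevant only makes explicit what the paper's proof uses implicitly.
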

	\begin{proof}
		We consider two phases in this proof. The first phase is to find the solution $T$ with $s=l$, \ie $T$ contains all the edges of $G^S$. After this phase, we know that any offspring is a Pareto-optimal solution. The next phase is to cover the whole Pareto front.
		
		Note that all the solutions in the current population have the same value of $s<l$. Thus, the probability of choosing solution $T$ with highest $s$ is 1. To increase $s$, the algorithm adds edge $e \in G^S\setminus T$ with probability $(l-s)/m$. Adding $e$ can cause a cycle with size at most $n$. In the worst case, there is only one edge in the cycle that can be removed without removing another optimal edge. Hence, a beneficial removing happens with probability of $1/n$. Therefore, the probability of increasing $s$ by $1$ is at least $e^{-1}\cdot\frac{l-s}{m}\cdot \frac{1}{n}\text{,}$ where $e^{-1}$ is the probability that GSEMO adds only one edge. Such mutation step happens after $O(mn/(l-s))$ iterations in expectation. The minimum initial value for $s$ is zero and $l$ is at most $n-1$. Thus, the expected time for GSEMO to finish phase one is upper bounded by
		$$\sum_{i=0}^{l-1} \frac{mn}{e(l-s)}\leq n^3 \sum_{i=1}^{n} \frac{1}{n} = O(n^3\log{n})$$
		
		In the second phase, GSEMO does not accept a solution with $s<l$. Hence, the same argument as in Theorem \ref{thm:GSEMO_G_1} proves that GSEMO finishes the second phase in $O(n^3\log{n})$ expected time and this completes the proof.
	\end{proof}
	
	Now we consider GSEMO-BM algorithm with biased mutation that select the edges with $q(e)=q_{\mathrm{b}}^{\mathrm{m}}(r)\text{.}$ The number of edges in the tail of $G_1^{\mathrm{m}}$ and $G_2^{\mathrm{m}}$ is the same and equal to $3n/4$. In both of the graphs, these edges dominate every other edges and consequently have lower non-domination ranks, \ie each edge have a unique random rank within $\{1,\cdots,3n/4\}$. Moreover, in $G_2^{\mathrm{m}}$, edges of $G^S$ dominate other edges of $G^C$. Hence, ranks $3n/4, \cdots,(3n/4)+l$ belong to the edges of $G^S$. Therefore, as Lemma \ref{lem:MO-O(n)-probability} shows, all the edges that belong to the moMSTs in $G_1^{\mathrm{m}}$ and $G_2^{\mathrm{m}}$ have the selection probability $\Theta(1/n)$. Using the same arguments as in Theorems \ref{thm:GSEMO_G_1} and \ref{thm:GSEMO_G_2}, the following result hold for the performance of GSEMO on the graphs $G_1^{\mathrm{m}}$ and $G_2^{\mathrm{m}}$.
	\begin{corollary}
		On $G_1^{\mathrm{m}}$ and $G_2^{\mathrm{m}}$, given an initial spanning tree $T$, GSEMO-BM needs expected time $O(n^2 \log n)$ to cover the Pareto-front.
	\end{corollary}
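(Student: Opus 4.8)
The plan is to replay the proofs of Theorems~\ref{thm:GSEMO_G_1} and~\ref{thm:GSEMO_G_2} almost verbatim, replacing the uniform per-edge insertion probability $1/m$ by the biased probability $q_{\mathrm{b}}^{\mathrm{m}}(r)=\Theta(1/n)$ guaranteed by Lemma~\ref{lem:MO-O(n)-probability} wherever a \emph{specific} edge that belongs to a non-dominated spanning tree has to be inserted. The first step is thus to confirm the ranks of all such edges. The $3n/4$ tail edges dominate every clique edge, since their weights $(1,2)$ and $(2,1)$ are coordinate-wise below both $(k,k)$ and $(u,u)$ (recall $k,u>2$); consequently a tail edge has domination number at most $3n/4=O(n)$. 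In $G_2^{\mathrm{m}}$ the $l$ edges of $G^S$ are in turn weakly dominated only by the tail edges and by one another, hence have domination number at most $3n/4+l=O(n)$ because $l\le n/2-1$. After breaking ties, these edges occupy ranks within $\{1,\ldots,3n/4+l\}$, so Lemma~\ref{lem:MO-O(n)-probability} yields selection probability $\Theta(1/n)$ for each of them in a single-edge mutation step.

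For $G_1^{\mathrm{m}}$ I would keep the transition structure of Theorem~\ref{thm:GSEMO_G_1}: moving a population member of weight $f_i$ to a new member of weight $f_{i+1}$ now succeeds with probability at least $\frac{1}{i+1}\cdot e^{-1}\cdot\Theta\!\left(\frac{\eta-i}{n}\right)\cdot\frac{2}{3}$, the only change being that each of the $\eta-i$ insertable bottom edges is now hit with probability $\Theta(1/n)$ rather than $1/m$; the Poisson-$0$ factor $e^{-1}$ and the constant probability $\ge 2/3$ of dropping a non-bottom edge from the size-$3$ cycle are untouched. Waiting-time arguments then give the bound
\[
\sum_{i=0}^{\eta-1}\Theta\!\left(\frac{n(i+1)}{\eta-i}\right)\le\Theta(n\eta H_\eta)=O(n^2\log n),
\]
and the analogous (in fact $O(n^2)$) estimate for the decreasing direction $f_i\to f_{i-1}$ leaves the total at $O(n^2\log n)$, as one of the two directions is always available.

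For $G_2^{\mathrm{m}}$ I would retain the two-phase decomposition of Theorem~\ref{thm:GSEMO_G_2}. In phase one Lemma~\ref{lem:strict_dominance} still forces the whole population to share the current value $s$, so the individual with the largest $s$ is selected with probability $1$; adding one of the $l-s$ missing $G^S$ edges has probability $\Theta\!\left(\frac{l-s}{n}\right)$, while removing the unique admissible edge from the resulting cycle of length at most $n$ still has probability $\ge 1/n$ (removal remains uniform, hence unaffected by the bias). Thus $s$ increases within $\Theta\!\left(n^2/(l-s)\right)$ expected iterations, and phase one ends after $\sum_{s=0}^{l-1}\Theta\!\left(n^2/(l-s)\right)=O(n^2\log n)$ steps. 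Phase two is exactly the $G_1^{\mathrm{m}}$ argument applied to the tail and contributes another $O(n^2\log n)$; adding the two phases proves the corollary.

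The part that needs the most care is the rank bookkeeping of the first paragraph: one must verify that no clique edge of weight $(k,k)$ — with $k$ possibly much larger than $n$ — can weakly dominate a tail edge or a $G^S$ edge, so that the domination numbers of all moMST-relevant edges genuinely stay $O(n)$ and Lemma~\ref{lem:MO-O(n)-probability} is applicable. A related point worth stating explicitly is that GSEMO-BM never has to \emph{insert} a heavy clique edge of rank $\Theta(n^2)$ (which it could do only in expected exponential time): since the algorithm is initialised with a spanning tree, the heavy edges needed to complete $G^C$ are present from the start and are only ever \emph{removed} from cycles, and removals are uniform. Everything else is the same harmonic-sum computation as before with $m$ replaced by $n$, which — because $m=\Theta(n^2)$ owing to the clique $K_{n/2}$ — is precisely where the saved factor of $n$ comes from.
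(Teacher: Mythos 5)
Your proposal is correct and follows essentially the same route as the paper: establish via the domination-number bookkeeping that all tail edges (and, in $G_2^{\mathrm{m}}$, all $G^S$ edges) have rank $O(n)$ and hence selection probability $\Theta(1/n)$ by Lemma~\ref{lem:MO-O(n)-probability}, then replay the waiting-time/harmonic-sum arguments of Theorems~\ref{thm:GSEMO_G_1} and~\ref{thm:GSEMO_G_2} with $1/m=\Theta(1/n^2)$ replaced by $\Theta(1/n)$, saving a factor of $n$. Your additional observations — that no heavy clique edge ever needs to be inserted and that cycle-edge removal is unaffected by the bias — are exactly the points the paper leaves implicit, and they are verified correctly.
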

	
	\section{Conclusion}
	\label{sec:conclusion}
	
	We performed a rigorous asymptotic runtime analysis of evolutionary algorithms with biased mutation for the classic Minimum Spanning Tree problem. Bias in this context means that edges of low weight in the single-objective case and of low domination number in the multi-objective case are assigned a higher probability of mutation. Our findings reveal that bias is blessing and curse at the same time. While a significant time complexity speedup can be achieved in some cases, bias may also lead to exponential expected optimization time if edges of high rank are part of optimal solutions. We showed that using the biased and unbiased mutations simultaneously is the key to avoid the extreme cases of bias. We will consider the generalization of the achieved results to more general graph classes in future work.
	
	\section*{Acknowledgment}
	This work has been supported by the Australian Research Council (ARC) through grants DP160102401 and DP190103894.

	\bibliographystyle{unsrt}  
	\bibliography{references}  

\begin{thebibliography}{10}

\bibitem{DBLP:books/sp/chiong12}
Raymond Chiong, Thomas Weise, and Zbigniew Michalewicz.
\newblock {\em Variants of Evolutionary Algorithms for Real-World
  Applications}.
\newblock Springer Publishing Company, Incorporated, 2011.

\bibitem{Deb2001}
Kalyanmoy Deb.
\newblock {\em Multi-Objective Optimization Using Evolutionary Algorithms}.
\newblock John Wiley \& Sons, Inc., New York, NY, USA, 2001.

\bibitem{auger2011theory}
Anne Auger and Benjamin Doerr.
\newblock {\em Theory of randomized search heuristics: Foundations and recent
  developments}, volume~1.
\newblock World Scientific, 2011.

\bibitem{DBLP:conf/icec/Rudolph94}
G{\"{u}}nter Rudolph.
\newblock Convergence of non-elitist strategies.
\newblock In {\em Proceedings of the First {IEEE} Conference on Evolutionary
  Computation, {IEEE} World Congress on Computational Intelligence, Orlando,
  Florida, USA, June 27-29, 1994}, pages 63--66. {IEEE}, 1994.

\bibitem{Kr56}
Joseph~B. Kruskal.
\newblock On the shortest spanning subtree of a graph and the traveling
  salesman problem.
\newblock {\em Proceedings of the American Mathematical Society}, 7(1):48--50,
  1956.

\bibitem{Ruzika2009}
Stefan Ruzika and Horst~W. Hamacher.
\newblock {A survey on multiple objective minimum spanning tree problems}.
\newblock {\em Lecture Notes in Computer Science (including subseries Lecture
  Notes in Artificial Intelligence and Lecture Notes in Bioinformatics)}, 5515
  LNCS:104--116, 2009.

\bibitem{ZG1999GeneticAlgorithm}
G~Zhou and M~Gen.
\newblock {Genetic Algorithm Approach on Multi-Criteria Minimum Spanning Tree
  Problem}.
\newblock {\em European Journal of Operational Research}, 114:141--152, 1999.

\bibitem{KC2001AComparisonOfEncodings}
J.D. Knowles and D.W. Corne.
\newblock {A Comparison of Encodings and Algorithms for Multiobjective Minimum
  Spanning Tree Problems}.
\newblock {\em Evolutionary Computation}, 1:544--551, 2001.

\bibitem{BG2017AParetoBeneficial}
J~Bossek and C~Grimme.
\newblock A pareto-beneficial sub-tree mutation for the multi-criteria minimum
  spanning tree problem.
\newblock In {\em Proceedings of the 2017 IEEE Symposium Series on
  Computational Intelligence (SSCI)}, pages 3280--3287, Honolulu, Hawai, 2017.
  IEEE.

\bibitem{DBLP:journals/tcs/NeumannW07}
Frank Neumann and Ingo Wegener.
\newblock Randomized local search, evolutionary algorithms, and the minimum
  spanning tree problem.
\newblock {\em Theoretical Computer Science}, 378(1):32--40, 2007.

\bibitem{DBLP:journals/tcs/NeumannW10}
Frank Neumann and Carsten Witt.
\newblock Ant colony optimization and the minimum spanning tree problem.
\newblock {\em Theoretical Computer Science}, 411(25):2406--2413, 2010.

\bibitem{DBLP:journals/nc/NeumannW06}
Frank Neumann and Ingo Wegener.
\newblock Minimum spanning trees made easier via multi-objective optimization.
\newblock {\em Natural Computing}, 5(3):305--319, 2006.

\bibitem{DBLP:journals/eor/Neumann07}
Frank Neumann.
\newblock Expected runtimes of a simple evolutionary algorithm for the
  multi-objective minimum spanning tree problem.
\newblock {\em European Journal of Operational Research}, 181(3):1620--1629,
  2007.

\bibitem{DBLP:journals/algorithmica/DoerrJW12}
Benjamin Doerr, Daniel Johannsen, and Carola Winzen.
\newblock Multiplicative drift analysis.
\newblock {\em Algorithmica}, 64(4):673--697, 2012.

\bibitem{DBLP:conf/foga/ReichelS09}
Joachim Reichel and Martin Skutella.
\newblock On the size of weights in randomized search heuristics.
\newblock In {\em Foundations of Genetic Algorithms, 10th {ACM} {SIGEVO}
  International Workshop, {FOGA} 2009, Orlando, Forida, USA, January 9-11,
  2009, Proceedings}, pages 21--28, 2009.

\bibitem{DBLP:conf/gecco/Witt14}
Carsten Witt.
\newblock Revised analysis of the {(1+1)} ea for the minimum spanning tree
  problem.
\newblock In {\em Genetic and Evolutionary Computation Conference, {GECCO} '14,
  Vancouver, BC, Canada, July 12-16, 2014}, pages 509--516, 2014.

\bibitem{DoerrHN2007}
Benjamin Doerr, Nils Hebbinghaus, and Frank Neumann.
\newblock {Speeding Up Evolutionary Algorithms Through Asymmetric Mutation
  Operators}.
\newblock {\em Evolutionary Computation}, 15(4):401--410, dec 2007.

\bibitem{DoerrHN2006}
Benjamin Doerr, Nils Hebbinghaus, and Frank Neumann.
\newblock {Speeding Up Evolutionary Algorithms Through Restricted Mutation
  Operators}.
\newblock In Thomas~Philip Runarsson, Hans-Georg Beyer, Edmund Burke, Juan~J
  Merelo-Guerv{\'{o}}s, L~Darrell Whitley, and Xin Yao, editors, {\em Parallel
  Problem Solving from Nature - PPSN IX}, pages 978--987, Berlin, Heidelberg,
  2006. Springer Berlin Heidelberg.

\bibitem{JansenS2010}
Thomas Jansen and Dirk Sudholt.
\newblock {Analysis of an Asymmetric Mutation Operator}.
\newblock {\em Evolutionary Computation}, 18(1):1--26, mar 2010.

\bibitem{FriedrichQW2018}
Tobias Friedrich, Francesco Quinzan, and Markus Wagner.
\newblock {Escaping Large Deceptive Basins of Attraction with Heavy-tailed
  Mutation Operators}.
\newblock In {\em Proceedings of the Genetic and Evolutionary Computation
  Conference}, GECCO '18, pages 293--300, New York, NY, USA, 2018. ACM.

\bibitem{FriedrichGQW2018}
Tobias Friedrich, Andreas G{\"{o}}bel, Francesco Quinzan, and Markus Wagner.
\newblock {Heavy-Tailed Mutation Operators in Single-Objective Combinatorial
  Optimization}.
\newblock In Anne Auger, Carlos~M Fonseca, Nuno Louren{\c{c}}o, Penousal
  Machado, Lu{\'{i}}s Paquete, and Darrell Whitley, editors, {\em Parallel
  Problem Solving from Nature -- PPSN XV}, pages 134--145, Cham, 2018. Springer
  International Publishing.

\bibitem{DBLP:journals/tec/RaidlKJ06}
G{\"{u}}nther~R. Raidl, Gabriele Koller, and Bryant~A. Julstrom.
\newblock Biased mutation operators for subgraph-selection problems.
\newblock {\em {IEEE} Trans. Evolutionary Computation}, 10(2):145--156, 2006.

\bibitem{DBLP:conf/gecco/BossekG019}
Jakob Bossek, Christian Grimme, and Frank Neumann.
\newblock On the benefits of biased edge-exchange mutation for the
  multi-criteria spanning tree problem.
\newblock In Anne Auger and Thomas St{\"{u}}tzle, editors, {\em Proceedings of
  the Genetic and Evolutionary Computation Conference, {GECCO} 2019, Prague,
  Czech Republic, July 13-17, 2019}, pages 516--523. {ACM}, 2019.

\bibitem{CoelloLV2006}
Carlos A~Coello Coello, Gary~B Lamont, and David A~Van Veldhuizen.
\newblock {\em {Evolutionary Algorithms for Solving Multi-Objective Problems
  (Genetic and Evolutionary Computation)}}.
\newblock Springer-Verlag New York, Inc., Secaucus, NJ, USA, 2006.

\bibitem{DBLP:conf/focs/Broder89}
Andrei~Z. Broder.
\newblock Generating random spanning trees.
\newblock In {\em 30th Annual Symposium on Foundations of Computer Science,
  Research Triangle Park, North Carolina, USA, 30 October - 1 November 1989},
  pages 442--447. {IEEE} Computer Society, 1989.

\bibitem{DBLP:conf/soda/MadryST15}
Aleksander Madry, Damian Straszak, and Jakub Tarnawski.
\newblock Fast generation of random spanning trees and the effective resistance
  metric.
\newblock In Piotr Indyk, editor, {\em Proceedings of the Twenty-Sixth Annual
  {ACM-SIAM} Symposium on Discrete Algorithms, {SODA} 2015, San Diego, CA, USA,
  January 4-6, 2015}, pages 2019--2036. {SIAM}, 2015.

\bibitem{DBLP:conf/foga/RoostapourP019}
Vahid Roostapour, Mojgan Pourhassan, and Frank Neumann.
\newblock Analysis of baseline evolutionary algorithms for the packing while
  travelling problem.
\newblock In Tobias Friedrich, Carola Doerr, and Dirk~V. Arnold, editors, {\em
  Proceedings of the 15th {ACM/SIGEVO} Conference on Foundations of Genetic
  Algorithms, {FOGA} 2019, Potsdam, Germany, August 27-29, 2019}, pages
  124--132. {ACM}, 2019.

\bibitem{motwani_raghavan_1995}
Rajeev Motwani and Prabhakar Raghavan.
\newblock {\em Randomized Algorithms}.
\newblock Cambridge University Press, 1995.
\newblock Page 61.

\bibitem{DBLP:journals/tcs/DrosteJW02}
Stefan Droste, Thomas Jansen, and Ingo Wegener.
\newblock On the analysis of the {(1+1)} evolutionary algorithm.
\newblock {\em Theor. Comput. Sci.}, 276(1-2):51--81, 2002.

\bibitem{kano1987maximum}
Mikio Kano.
\newblock Maximum and $k$-th maximal spanning trees of a weighted graph.
\newblock {\em Combinatorica}, 7(2):205--214, 1987.

\bibitem{DBLP:journals/algorithmica/DoerrG13}
Benjamin Doerr and Leslie~Ann Goldberg.
\newblock Adaptive drift analysis.
\newblock {\em Algorithmica}, 65(1):224--250, 2013.

\end{thebibliography}

\end{document}